\crefname{section}{Sec.}{Secs.}
\crefname{proposition}{Prop.}{Props.}
\crefname{model}{Mod.}{Mods.}
\crefname{appendix}{App.}{Apps.}
\crefname{algorithm}{Alg.}{Algs.}
\crefname{prop}{Prop.}{Props.}
\colorlet{asblue}{colour1}
\colorlet{asgreen}{colour2}
\colorlet{asred}{colour3}
\colorlet{aspurple}{colour4}
\colorlet{asyellow}{colour5}
\colorlet{asturquoise}{colour6}
\colorlet{asgray}{colour7}
\newcolumntype{a}{>{\columncolor{asgray!20}}c}
\pgfplotsset{compat=1.18}
\definecolor{crimson2143940}{RGB}{214,39,40}
\definecolor{darkgray176}{RGB}{176,176,176}
\definecolor{darkorange25512714}{RGB}{255,127,14}
\definecolor{forestgreen4416044}{RGB}{44,160,44}
\definecolor{steelblue31119180}{RGB}{31,119,180}
\definecolor{ruiblue}{HTML}{4580DC}
\definecolor{ruired}{HTML}{D4504E}
\definecolor{ruigreen}{HTML}{83CD80}
\definecolor{ruiorange}{HTML}{FFC66C}
\newdimen\datasize
\newcommand{\dataset}[1]{%
\begin{tikzpicture}[baseline=-1ex,inner sep=0,outer sep=0]
\foreach \x [count=\i] in {1,2,3,4,5,6} 
  \node[minimum width=\datasize,minimum height=\datasize,rounded corners=1pt,path picture={\node at (path picture bounding box.center){\includegraphics[height=\datasize]{fig/data/#1/img\i}};}] at (1.05*\i*\datasize,0){};
\end{tikzpicture}
}
\newcommand{\addspace}{\,}
\newcommand{\rdcspace}{\!}
\renewcommand{\paragraph}[1]{\smallskip\noindent\textbf{#1}~~}
\newcommand{\mbd}[1]{\bm{#1}}
\DeclareRobustCommand{\bbone}[0]{\tikz[baseline=-1ex]\node[rotate=-30, inner sep=0pt, outer sep=0pt]{\faBone};}
\newcommand{\vx}{\mbf{x}}
\newcommand{\vtheta}{\bm{\theta}}
\newcommand{\vpsi}{\bm{\psi}}
\newcommand{\vphi}{\bm{\phi}}
\newcommand{\vepsilon}{\bm{\epsilon}}
\DeclareMathOperator*{\argmin}{arg\,min}
\newcommand{\Div}{\mathbf{Div}}
\newcommand{\E}{\mathcal{E}}
\newcommand{\Ehat}{\hat{\mathcal{E}}}
\newcommand{\Esrc}{\E(\vtheta;\domainD)}
\newcommand{\Ehatsrc}{\Ehat_{\text{ERM}}(\vtheta;\mcD)}
\newcommand{\Ehatsrcsam}{\Ehat_{\text{SAM}}(\vtheta;\mcD)}
\newcommand{\Etgt}{\E(\vtheta; \domainT)}
\newcommand{\Etgti}{\E(\vtheta;\domainT^{(i)})}
\newcommand{\domainD}{\mathfrak{D}}
\newcommand{\domainT}{\mathfrak{T}}
\newcommand{\loss}{\ell}
\newcommand{\expect}{\mathbb{E}}
\newcommand{\defeq}{\stackrel{\Delta}{=}}
\newcommand{\consta}{\sqrt{\frac{(v_k[\ln (N / v_k)+1]+\ln (K / \delta))}{2N}}}
\newcommand{\constb}{\sqrt{\frac{v \ln (N / v)+\ln (2 / \delta)}{N}}}
\newcommand*\circled[1]{\tikz[baseline=(char.base)]{\node[circle, draw=black!30, black!80, inner sep=1.1pt, fill=white, line width=1pt,font=\small\bf] (char) {\textcolor{black} #1};}}
\theoremstyle{plain}
\newtheorem{theorem}{Theorem}[section]
\newtheorem{lemma}[theorem]{Lemma}
\theoremstyle{definition}
\theoremstyle{remark}
\begin{document}

\title{Flatness Improves Backbone Generalisation in Few-shot Classification}
\author{
  Rui Li$^{1}$ \qquad Martin Trapp$^{1}$  \qquad Marcus Klasson$^{1,2}$ \qquad Arno Solin$^{1,2}$ \\
~\\
\small{
$^1$Aalto University\quad
$^2$Finnish Center for Artificial Intelligence \quad
}
}
\maketitle

\begin{abstract}

	Deployment of deep neural networks in real-world settings typically requires adaptation to new tasks with few examples.
	Few-shot classification (FSC) provides a solution to this problem by leveraging pre-trained backbones for fast adaptation to new classes.
	However, approaches for multi-domain FSC typically result in complex pipelines aimed at information fusion and task-specific adaptation without consideration of the importance of backbone training.
	In this work, we introduce an effective strategy for backbone training and selection in multi-domain FSC by utilizing flatness-aware training and fine-tuning.
	Our work is theoretically grounded and empirically performs on par or better than state-of-the-art methods despite being simpler.
	Further, our results indicate that backbone training is crucial for good generalisation in FSC across different adaptation methods. \looseness-1

\end{abstract}

\section{Introduction}
Deep neural networks have shown remarkable successes when trained on large labelled data sets. 
However, in many real-world applications, access to labelled data is limited and, therefore, training a network with good generalisation behaviour is challenging.
This has sparked research on methods to adapt pre-trained models to new data domains and concepts, \ie, classes, even if only a few examples exist. 
Few-shot classification (FSC) methods \citep{chen2019closer,wang2020generalizing} have shown promising performance in these scenarios.
Earlier works on FSC \citep{vinyals2016miniimagenet,finn2017model,snell2017prototypical} focus on homogeneous/single-domain learning tasks (\eg, \cite{lake2011omniglot,vinyals2016miniimagenet}), \ie, training data and test data both come from the same domain. 
However, as later shown, these elaborated methods can often be surpassed by simple fine-tuning \citep{chen2019closer, tian2020rethinking, dhillon2020baseline} when the distribution shift between training and test data is sufficiently small. 
Consequently, Meta-Dataset \citep{metadataset} was introduced as a heterogeneous, multi-domain benchmark to reflect more realistic settings in which models must be adapted to previously unseen data domains with potentially large distribution shifts. 

\begin{figure}[t]
	\centering
	\includegraphics{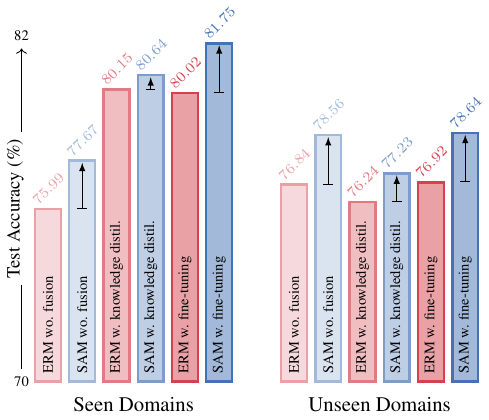}

	\caption{Average test accuracy on the Meta-Dataset benchmark for different backbone trainings using the adaptation by \cite{li2022tsa}. Across different information fusion methods, sharpness-aware minimisation (\textcolor{asblue}{SAM}) leads to better performance than empirical risk minimisation (\textcolor{asred}{ERM}), showing flatness improves backbone generalisation.
	}
	\label{fig:fig1}
\end{figure}

\message{The column width is: \the\columnwidth}

Existing approaches to tackle the multi-domain FSC problem can roughly be grouped into three categories: {\em (i)}~learn to fuse information of independent backbones to obtain generalisable features \cite{dvornik2020sur,liu2020urt}, 
{\em (ii)}~learn an auxiliary network that predicts parameters of task-specific layers added to the backbone  \cite{requeima2019cnaps,bateni2020simplecnaps,bateni2022transductive}, or 
{\em (iii)}~directly learn the parameters of task-specific layers during adaptation  \cite{triantafillou2021flute,li2021url,li2022tsa}.
Crucially, all of those approaches heavily depend on the generalisation behaviour of the backbone(s) to be transferable to new domains and concepts.
However, investigating effective backbone training with good generalisation behaviour is an overlooked topic and still in its infancy.

Recently, the connection between model generalisation and flat optimum in the loss landscape has been studied empirically and theoretically \cite{keskar2016flatness,dziugaite2017flatness,jiang2019flatness} in the deep learning community.
To this end, optimisers that seek flat minima have been proposed \cite{foret2020sam,izmailov2018swa} and have shown to improve generalisation in various deep learning applications \cite{chen2021sam-app1,bahri2022sam-app2}.
Moreover, recent findings in domain adaptation \cite{cha2021swad} indicate that flatness can improve generalisation in domain generalisation settings.
Raising the question to what extend does flatness improve generalisation in FSC.
An additional challenge introduced in multi-domain settings is how to avoid domain conflicts when fusing information. 
A simple approach that has shown to be effective is to train one backbone per source domain rather than training one general backbone \cite{dvornik2020sur}. 
The information from each backbone can then be fused to obtain multi-domain feature representations that generalise to new tasks.
However, as the number of training samples from each domain can vary (\eg, in Meta-dataset), effective fusing of the information from different backbones can be challenging.
In this work, we introduce a theoretically justified and effective approach for backbone training and selection in multi-domain FSC. 
The approach is based on {\em (i)}~seeking flat solutions during backbone training (\eg, \cite{foret2020sam,mollenhoff2022bsam}) to improve generalization, {\em (ii)}~fusing information in the multi-domain setting using fine-tuning, {\em (iii)}~and selecting the most compatible backbone for new tasks in cross-domain FSC settings.
As shown in \cref{fig:fig1}, combining these seemingly simple strategies results in a competitive approach compared to the state-of-the-art methods without changes of the adaptation method.
Moreover, we observe that sharpness-aware training (SAM) of the backbone consistently improves generalisation in FSC over standard empirical risk-minimization (ERM).
We present theoretical and empirical findings indicating that careful backbone training is crucial in FSC.
Henceforth, we advocate for more careful treatments of the used backbones and a more competitive baseline. %

Our contributions can be summarized as follows: 
\begin{itemize}%
	
	\item We introduce an effective approach for multi-domain FSC which performs on par or better than state-of-the-art methods despite being simpler.
		
	\item We present theoretical results that flatness can improve generalisation in FSC, motivating our approach to backbone training and selection.

	\item We show empirical evidence that {\em (i)}~flatness helps generalisation in FSC, {\em (ii)}~fine-tuning is an effective information fusing method, and {\em (iii)}~combining flatness and fine-tuning in the backbone training results in better performance compared to the state-of-the-art.

\end{itemize}

\begin{figure}[!t]
	\centering\footnotesize
	\input{fig/tikz/flatness.tex}
	\caption{Illustration that solutions in flat areas on the training loss can result in better generalisation behaviour on the test loss. }
	\label{fig:flatness}
\end{figure}

\section{Background}
\label{sec:background}
We use calligraphic letters to denote sets (\eg, the query set $\mcQ$), denote domains using fraktur font (\eg, the target domain $\domainT$), and use bold letters for vectors.
Further, we denote the risk associated with a hypothesis/model $f_{\vtheta}(\cdot)$ as $\mcE$ and use the hat-symbol $\hat{\mcE}$ whenever the risk is calculated \wrt an empirical distribution.

\subsection{Few-shot Classification} 

We assume to be given a 
training set $\mcD= \{(\vx_n, y_n)\}_{n=1}^N$ of $N$ input--output pairs, where $\vx$ denotes the input and $y$ its corresponding class label.
In FSC, the goal is to learn a model $f_{\vtheta}(\cdot)$ that can adapt to new classes or domains from few examples.
At test time, we are given multiple tasks $t = 1, \dots, T$ consisting of support $\mcS_t=\{(\vx_i, y_i)\}_{i=1}^{|\mcS_t|}$ and query sets $\mcQ_t=\{(\vx_j, y_j)\}_{j=1}^{|\mcQ_t|}$ with $\mcQ_t \cap \mcS_\tau = \emptyset$ and $|\mcS_t| \ll N$. 
Support sets and query sets in FSC are similar to training data and test data respectively in supervised learning.
Note the sets of classes at training and test time are disjoint and we might have a domain shift.

Let the model parameters be given as $\vtheta = \{\vphi, \vpsi\}$, where we refer to $\vphi$ as the task-agnostic backbone and $\vpsi$ as task-specific parameters.
Task-agnostic parameters are learnt on the training set and task-specific parameters are learnt on the support set during evaluation.
Specifically, learning a model in FSC can be divided into three phases: 
{\em (i)}~during training, learning $\vphi$ on the training set,
{\em (ii)}~at test time, injecting task-specific layers parametrised by $\vpsi$, and
{\em (iii)}~learning $\vpsi$ during the adaptation on the support set of a sampled task while keeping $\vphi$ fixed.
If the tasks are sampled from the same domain as the training set, we refer to the setting as in-domain and as cross-domain otherwise.
If multiple training sets are available, \eg, in multi-domain setting, we may have a backbone per data set or learn a general backbone. \looseness-1

\subsection{Sharpness-aware Minimisation}

Given a data set $\mcD$, the empirical risk minimisation (ERM) problem for a model $f_{\vtheta}(\cdot)$ and a pointwise loss function $\ell(\cdot, \cdot)$ is defined as:
\begin{equation}
   \arg\min_\theta \hat{\mcE}_{\text{ERM}} (\vtheta; \mcD, \alpha) = \arg\min_\theta \mathcal{L}(\vtheta) + \frac{\alpha}{2} \|\vtheta\|^2,
\end{equation}
where $\mathcal{L}(\vtheta)=\frac1N \sum_{n=1}^N \ell(f_{\vtheta}(\vx_n), y_n)$ and $\frac{\alpha}{2} \|\vtheta\|^2$ is a regularization term.
The goal of sharpness-aware minimisation (SAM) \cite{foret2020sam} is to reduce the generalisation error by additionally accounting for the loss geometry in the ERM objective.
In particular, SAM aims to simultaneously minimise the loss value and the loss sharpness by seeking parameters whose entire neighbourhood have uniformly low loss values under some $\vepsilon$ perturbation with $\|\vepsilon\| \leq \rho$, \ie,
\begin{equation}\label{eq:sam_obj}
	\hat{\mcE}_{\text{SAM}} (\vtheta; \mcD, \rho, \alpha) = \max_{\|\vepsilon\| \leq \rho} \mathcal{L}(\vtheta + \vepsilon) + \frac{\alpha}{2} \|\vtheta\|^2,
\end{equation}
where $\rho > 0$ defines the radius of the neighbourhood of $\vtheta$.

\cref{fig:flatness} provides an intuition that solutions to the SAM objective can result in improved generalisation behaviour of the trained model. 
Optimising the ERM objective can result in lower training loss compared to SAM.
However, in the case of a distribution shift between the training and test sets, a solution in a flat loss landscape may yield better generalisation on the test loss.
Based on this intuition, we study the use of SAM as a replacement of ERM in FSC.

\section{Methods}\label{sec:method}
We will now study the link between flatness and generalisation in the FSC in \cref{sec:method:theory}. 
Then in \cref{sec:method:backbone}, we introduce our backbone training protocol: based on our theoretical results, we use a flatness-seeking objective for backbone training and introduce a backbone selection method to choose the best backbone for adaptation. 
For information fusion, we propose to use a %
fine-tuning strategy to fuse information in the multi-domain settings. 

\begin{figure}
	\centering
	\includegraphics[width=\columnwidth]{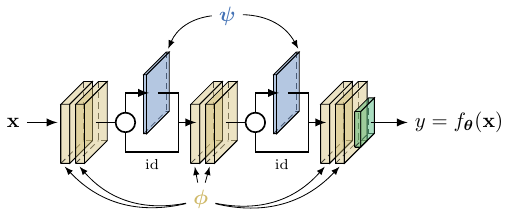}
	\vspace*{-2em}
	\caption{Decomposition of $f_{\vtheta}(\cdot)$ with $\vtheta = \{\vphi, \vpsi\}$ into task-agnostic layers parametrised by $\textcolor{asyellow}\vphi$ and task-specific layers parametrised by $\textcolor{asblue}\vpsi$. Additional gates $\circ$ are used to switch between task-specific layers and the identity function. Note that this construction is only for theoretical purposes and does not imply any additional operations in practice.}
	\label{fig:nnfunction}
\end{figure}
\subsection{Flatness Leads to a Better Backbone for Adaptation}\label{sec:method:theory}
In FSC the backbone is trained using data from the source domain $\domainD$ and later evaluated on data from the target domain $\domainT$.
We assume test tasks are sampled independently from sub-domains of the target domain $\domainT_t \subset \domainT$.
Recall that the source and target (sub)-domains have disjoint sets of classes and a possible distribution shift.

Let $\ell(\cdot, \cdot)$ be a bounded loss function where $\loss(y_1, y_2)=0$ iff $y_1=y_2$. 
Given a model $f_{\vtheta}(\cdot)$, we denote the \emph{empirical} risk on the source domain as $\Ehatsrc$ where $\mcD \in \domainD$, the SAM loss on $\domainD$ as $\Ehatsrcsam$, and the risk on $\domainT$ as:
\begin{equation}
	\Etgt \defeq \expect_{\domainT_t \sim \domainT} \left[\expect_{(\vx, y) \sim \domainT_t}\left[\loss(f_{\vtheta}(\vx), y)\right] \right]. 
\end{equation}
In the setting of domain generalisation, \cite{cha2021swad} showed the target domain loss can be bounded by the SAM loss on the source domain, the divergence between the source and the target, and a confidence bound that depends on the hyperparameter $\rho$ of the SAM loss.

\begin{theorem}[\cite{cha2021swad}]\label{thm:swad-theorem}
First, let $\{\Theta_k \subset \mathbb{R}^d, k=1, \ldots, K\}$, where $d$ is dimension of $\Theta$, be a finite cover of the parameter space $\Theta$ consisting of $K$ closed balls with radius $\nicefrac{\rho}2$ where $K \defeq \lceil(\operatorname{diam}(\Theta) / \rho)^d\rceil$. 
Denote the $VC$ dimension of $\Theta$ and $\Theta_k$ as $v$ and $v_k$, respectively.
Then, for any $\vtheta \in \Theta$, the following bound holds with probability at least $1-\delta$:
\begin{multline}\label{eq:swad}
	\mcE(\vtheta; \domainT) \leq \hat{\mathcal{E}}_{\text{SAM}}(\vtheta; \mathcal{D}) + \frac{1}{2} \Div(\domainD, \domainT) \\
	 +  \max_k \consta.
\end{multline}
\end{theorem}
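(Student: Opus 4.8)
The plan is to split the right-hand side of \eqref{eq:swad} into two independent ingredients, following \cite{cha2021swad}: a deterministic domain-adaptation inequality that passes from the population \emph{source} risk to the \emph{target} risk, and a uniform-convergence argument over a cover of $\Theta$ that controls the population source risk by the empirical SAM risk. For the first ingredient I would invoke the classical domain-adaptation bound: because $\loss$ is bounded and vanishes exactly when the prediction is correct, the ideal joint-risk term degenerates, and for every $\vtheta \in \Theta$ one gets
\[
  \mcE(\vtheta; \domainT) \;\leq\; \mcE(\vtheta; \domainD) \;+\; \tfrac12\,\Div(\domainD, \domainT),
\]
where $\Div$ is exactly the divergence for which this inequality holds with the factor $\tfrac12$ (treating $\domainT$ as the mixture over its sub-domains $\domainT_t$, consistent with the definition of $\mcE(\vtheta;\domainT)$). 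The regulariser $\tfrac{\alpha}{2}\|\vtheta\|^2$ is nonnegative and Lipschitz on the bounded set $\Theta$, so I would either carry it through all risks or drop it and reinstate it at the end at the cost of an $O(\rho)$ slack.

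For the second ingredient I would use the finite cover $\{\Theta_k\}_{k=1}^{K}$ from the statement, $K = \lceil(\operatorname{diam}(\Theta)/\rho)^d\rceil$ closed balls of radius $\rho/2$. The crucial observation is purely geometric: if $\vtheta \in \Theta_k$ then every $\vtheta' \in \Theta_k$ obeys $\|\vtheta - \vtheta'\| \leq \rho$ (two points of a radius-$\rho/2$ ball), hence $\Theta_k \subseteq \{\vtheta + \vepsilon : \|\vepsilon\| \leq \rho\}$ and therefore $\max_{\vtheta'\in\Theta_k}\hat{\mcE}(\vtheta';\mcD) \leq \hat{\mcE}_{\text{SAM}}(\vtheta;\mcD)$. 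Next, for each $k$ the loss class $\{(\vx,y)\mapsto \loss(f_{\vtheta'}(\vx),y) : \vtheta'\in\Theta_k\}$ has $VC$ dimension $v_k$, so a standard $VC$ uniform-convergence bound gives, with probability at least $1-\delta/K$, that $\sup_{\vtheta'\in\Theta_k}\bigl(\mcE(\vtheta';\domainD) - \hat{\mcE}(\vtheta';\mcD)\bigr) \leq \consta$. A union bound over $k = 1,\dots,K$ makes all $K$ inequalities hold on a single event of probability $\geq 1-\delta$ and produces the $\ln(K/\delta)$ term. On that event, for any $\vtheta$ pick $k$ with $\vtheta \in \Theta_k$; then
\[
  \mcE(\vtheta;\domainD) \;\leq\; \hat{\mcE}(\vtheta;\mcD) + \consta \;\leq\; \max_{\vtheta'\in\Theta_k}\hat{\mcE}(\vtheta';\mcD) + \max_k \consta \;\leq\; \hat{\mcE}_{\text{SAM}}(\vtheta;\mcD) + \max_k \consta,
\]
and combining with the domain-adaptation inequality gives \eqref{eq:swad}. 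The $\max_k$ is unavoidable precisely because we need one event that serves every $\vtheta$ simultaneously.

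I expect the main obstacle to be the bookkeeping that makes the final bound uniform in $\vtheta$: threading the cover cardinality $K$ into the confidence term through the union bound, using the containment $\Theta_k \subseteq \{\vtheta+\vepsilon:\|\vepsilon\|\leq\rho\}$ in the correct direction, and verifying that the $VC$ dimensions $v_k$ appearing are those of the \emph{restricted} classes on each $\Theta_k$ (so a finer cover genuinely shrinks each $v_k$ and the bound is not vacuous). A secondary technicality is reconciling the regulariser, which $\hat{\mcE}_{\text{SAM}}$ evaluates at $\vtheta$ whereas $\max_{\vtheta'\in\Theta_k}\hat{\mcE}(\vtheta';\mcD)$ moves it across the ball — dealt with by the Lipschitz/boundedness remark above — and checking that replacing $\domainT$ by its mixture over sub-domains leaves the divergence step intact.
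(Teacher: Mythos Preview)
Your proposal is correct and follows essentially the same two-ingredient decomposition as the paper (and \cite{cha2021swad}): a deterministic domain-adaptation inequality $\mcE(\vtheta;\domainT)\leq\mcE(\vtheta;\domainD)+\tfrac12\Div(\domainD,\domainT)$, followed by a cover-plus-VC-plus-union-bound argument to control $\mcE(\vtheta;\domainD)$ by $\hat{\mcE}_{\text{SAM}}(\vtheta;\mcD)$ uniformly in $\vtheta$. The only cosmetic difference is in the second ingredient: the paper introduces, for each ball $\Theta_k$, the \emph{population}-risk maximiser $\vtheta_k\in\arg\max_{\Theta_k}\mcE(\cdot;\domainD)$ as an anchor, uses $\mcE(\vtheta;\domainD)\leq\mcE(\vtheta_{k'};\domainD)$ and $\hat{\mcE}_{\text{SAM}}(\vtheta;\mcD)\geq\hat{\mcE}(\vtheta_{k'};\mcD)$ to reduce to the deviation at $\vtheta_{k'}$, and only then invokes the uniform VC bound on $\Theta_{k'}$; you apply the uniform VC bound on $\Theta_k$ directly at $\vtheta$ and then pass to the SAM risk via the containment $\Theta_k\subseteq\{\vtheta+\vepsilon:\|\vepsilon\|\leq\rho\}$. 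Both routes land on exactly the same $\max_k$ confidence term, so this is a matter of presentation rather than substance; your version is slightly more direct and avoids the paper's (harmless) case split on the sign of $\mcE(\vtheta;\domainT)-\mcE(\vtheta;\domainD)$.
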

In \cref{eq:swad}, $\Div(\domainD, \domainT)$ is the divergence between source and target domain. 
Building on their results, we show the expected generalisation gap on the target domain over test tasks in FSC can be upper bounded by the gap between the SAM and the ERM loss.

For this, let us assume a global labeling function and let $f_{\vtheta}(\cdot)$ be decomposed as follows: {\em (i)}~task-agnostic functions parametrised by $\vphi$, {\em (ii)}~task-specific functions parametrised by $\vpsi$, and {\em (iii)}~gating functions that switch between task-specific functions and the identity function. 
Note that gating functions are only introduced to make the model contains task-agnostic and task-specific parameters during both training and testing so we can ensure theoretically correctness in the FSC setting.
It does not add any extra operation in practice.
During training on the source domain, all gates are set to choose the identity functions, while during adaptation on the target, all gates are set to select the task-specific layers.
\cref{fig:nnfunction} illustrates our construction of the model functions in the FSC setting with the decomposition of $f_{\vtheta}(\cdot)$ into task-agnostic and task-specific parts.

\begin{figure*}[!t]
	\centering
	
	\small
	\begin{tikzpicture}[outer sep=0pt, every label/.style={align=center}]

        \tikzstyle{blob}=[circle, draw=black!30, black!80, inner sep=1.1pt, fill=white, line width=1pt,font=\small\bf]

		\draw[fill=asgray!20,rounded corners=1mm, draw=none] (0,-2) rectangle (8,2.5);
		\node[below] at (4, -2) {\textbf{Training}};

		\draw[fill=asgray!20,rounded corners=1mm,draw=none] (9,-2) rectangle (17,2.5);
		\node[below] at (13, -2) {\textbf{Adaptation}};

		\node[blob] (step1) at (0.5, 1.5) {1.};

		\node (imagenet) at (1.5, 1.5) {\LARGE\faDatabase};
		\node[label=above:{Backbone}] (imagenet-bone) at (3.5, 1.5) {\LARGE\bbone};
		\draw[-latex] (imagenet) -- node[midway, below] {SAM} (imagenet-bone);

		\node[blob] (step2) at (1.5, -0.75) {2.};

		\node (data1) at (2.5, 0.0) {\textcolor{asblue}{\large\faDatabase$_1$}};
		\node (datai) at (2.4, -0.75) {$\bm{\vdots}$};
		\node (datak) at (2.5, -1.5) {\textcolor{aspurple}{\large\faDatabase$_j$}};

		\node[] (data1-bone) at (3.5, 0.0) {\textcolor{asgray}{\large\bbone}};
		\node (datai-bone) at (3.5, -0.75) {\textcolor{asgray}{$\bm{\vdots}$}};
		\node[] (datak-bone) at (3.5, -1.5) {\textcolor{asgray}{\large\bbone}};

		\node[label=above:{Backbone bank}] (imagenet-bone2) at (6.5, 0.75) {\large\bbone};
		\node[] (data1-bone2) at (6.5, 0.0) {\textcolor{asblue}{\large\bbone$_1$}};
		\node (datai-bone2) at (6.4, -0.75) {$\bm{\vdots}$};
		\node[] (datak-bone2) at (6.5, -1.5) {\textcolor{aspurple}{\large\bbone$_j$}};

		\draw[-latex, dashed] (imagenet-bone) -- node[midway, xshift=15pt] {copy} (data1-bone);

		\draw[-latex] (data1-bone) --  (data1-bone2);
		\draw[draw=none] (datai-bone) -- node[midway, align=center] {Fine-tune\\ w. SAM} (datai-bone2);
		\draw[-latex] (datak-bone) -- (datak-bone2);

		\draw[fill=white, draw=none] (9,2)  {[rounded corners=1mm] -- ++(1,0) } -- ++(0,0.5) {[rounded corners=1mm] -- ++(-1,0)  -- ++(0,-0.5)} -- cycle;
		\node at (9.5, 2.25) {Task $t$};

		\node[blob] (step3) at (9.5, -0.25) {3.};

		\node[label=above:{Estimate\\compatability}] (div-imagenet) at (10.5, 0.75) {{\large\faDatabase} \, $\| \; \mcS_t$ };
		\node (div-data1) at (10.5, 0.0) {\textcolor{asblue}{\large\faDatabase$_1$} $\| \; \mcS_t$ };
		\node (div-dataj) at (10.5, -0.75) {$\bm{\vdots}$};
		\node (div-datak) at (10.5, -1.5) {\textcolor{aspurple}{\large\faDatabase$_j$} $\| \; \mcS_t$ };

		\node[label=right:{Support set}] (support) at (13, 1.5) {\large$\mcS_t$};
		\draw[-latex, dashed,shorten >=4pt] (support) -- (div-imagenet.east) ;

		\node (argmin) at (13, -0.25) {Select best};
		\node (selected-bone) at (14.1, -0.25) {\textcolor{asblue}{\large\bbone}};
		\draw[] (div-imagenet) to[in=180,out=0] (argmin.west);
		\draw[] (div-data1) to[in=180,out=0] (argmin.west);
		\draw[] (div-datak) to[in=180,out=0] (argmin.west);

		\node (adapted-bone) at (16, -0.25) {\textcolor{asturquoise}{\large\bbone}};
		\draw[-latex] (selected-bone) -- node[below, midway, name=adapt-node] {Adapt} (adapted-bone);

		\draw[-latex, dashed] (support) -- ([yshift=2pt]adapt-node.north);

	\end{tikzpicture}\\
	\newcommand{\blob}[1]{\protect\tikz[baseline=(char.base)]\protect\node[circle, draw=black!30, black!80, inner sep=1.1pt, fill=white, line width=1pt,font=\small\bf\tiny](char){#1};}%
	\caption{Our training protocol: \blob{1.}~SAM-based backbone \protect\bbone\ training on a large and diverse data set (\eg, ImageNet), \blob{2.}~SAM-based fine-tuning of \bbone\xspace on additional training data sets, \blob{3.}~backbone selection and adaptation on the selected backbone \textcolor{asblue}{\protect\bbone}$\rightarrow$\textcolor{asturquoise}{\protect\bbone}. \label{fig:protocol} }
\end{figure*}

\begin{theorem} \label{thm:main-theorem}
Let $\vtheta^*_{\text{SAM}}$ denote the optimal solution of the SAM loss $\Ehatsrcsam$, \ie, $\vtheta^*_{\text{SAM}} \defeq \argmin_{\vtheta}\Ehatsrcsam$.
Then, the gap between the loss $\min _{\vtheta} \Etgt$ and the loss of the optimal SAM solution on the training set, $\mcE(\vtheta^*_{\text{SAM}};\domainT)$, has the following bound with probability at least $1-\delta$:
\begin{align}
	\mcE(&\vtheta^*_{\text{SAM}};\domainT)-  \min_{\vtheta} \mcE(\vtheta; \domainT) \nonumber \\
	&\leq \textcolor{asblue}{\hat{\mcE}_{\text{SAM}}(\vtheta^*_{\text{SAM}}; \mcD) - \min_{\vtheta} \hat{\mcE}_{\text{ERM}}(\vtheta; \mcD)} \nonumber\\
	& \quad + \textcolor{asgreen}{\expect_{\domainT_{t} \sim \domainT}\left[\Div(\domainD, \domainT_{t})\right]}  + \textcolor{asgray!60!black}{\constb} \nonumber \\
	& \quad + \textcolor{asgray!60!black}{\max_k \consta} ,\label{eq:bound}
\end{align}
where $N$ is the number of training examples and $\Div(\domainD, \domainT_{t})$ is the divergence between domain $\domainD$ and $\domainT_{t}$. For further details and the proof, see \cref{app:proofs}.
\end{theorem}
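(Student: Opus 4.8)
The plan is to bracket the target risk from two sides — an upper bound on $\mcE(\vtheta^*_{\text{SAM}};\domainT)$ and a lower bound on $\min_{\vtheta}\mcE(\vtheta;\domainT)$ — and then subtract the two inequalities so that the divergence terms add up into the single term in \cref{eq:bound}. The whole argument is carried out in the gated decomposition of $f_{\vtheta}$ from \cref{fig:nnfunction}: it is precisely because a \emph{single} parameter space $\Theta=\{\vphi,\vpsi\}$ and hypothesis class (with VC dimension $v$, admitting the finite closed-ball cover $\{\Theta_k\}$ of \cref{thm:swad-theorem}) governs both the source-domain training phase (all gates set to identity) and the target adaptation phase (all gates set to the task-specific branch) that a domain-generalisation result such as \cref{thm:swad-theorem} can be invoked in the FSC setting; the global labelling-function assumption plays the same role here as it does in \cite{cha2021swad}.

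For the upper bound, I would apply \cref{thm:swad-theorem} with the ``target domain'' instantiated as a test sub-domain $\domainT_{t}\subset\domainT$. The key observation is that in \cref{eq:swad} only the divergence term depends on $\domainT_{t}$, whereas $\hat{\mcE}_{\text{SAM}}(\vtheta;\mcD)$ and $\max_k\consta$ depend on the training sample $\mcD$ alone; hence the inequality $\mcE(\vtheta;\domainT_{t})\le\hat{\mcE}_{\text{SAM}}(\vtheta;\mcD)+\tfrac12\Div(\domainD,\domainT_{t})+\max_k\consta$ holds, for all $\vtheta\in\Theta$ and all $\domainT_{t}$ simultaneously, on one and the same event of probability at least $1-\delta/2$, since the randomness (over $\mcD$) in the bound of \cref{thm:swad-theorem} does not involve the target domain. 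On that event one may take $\expect_{\domainT_{t}\sim\domainT}[\cdot]$ of both sides and use the definition of $\mcE(\vtheta;\domainT)$ to obtain $\mcE(\vtheta;\domainT)\le\hat{\mcE}_{\text{SAM}}(\vtheta;\mcD)+\tfrac12\expect_{\domainT_{t}\sim\domainT}[\Div(\domainD,\domainT_{t})]+\max_k\consta$ for every $\vtheta$, and in particular for $\vtheta=\vtheta^*_{\text{SAM}}$.

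For the lower bound, I would use the symmetric companion of the domain-divergence inequality, $\mcE(\vtheta;\domainT_{t})\ge\mcE(\vtheta;\domainD)-\tfrac12\Div(\domainD,\domainT_{t})$, valid for every $\vtheta$ and $\domainT_{t}$, take $\expect_{\domainT_{t}\sim\domainT}$, and then invoke the standard two-sided uniform VC deviation bound over $\Theta$ (using that $\ell$ is bounded), which on an event of probability at least $1-\delta/2$ gives $\mcE(\vtheta;\domainD)\ge\hat{\mcE}_{\text{ERM}}(\vtheta;\mcD)-\constb$ for all $\vtheta$, the $\ell_{2}$ term being carried consistently on both sides. Since the divergence and confidence terms do not depend on $\vtheta$, minimising the combined inequality over $\vtheta$ yields $\min_{\vtheta}\mcE(\vtheta;\domainT)\ge\min_{\vtheta}\hat{\mcE}_{\text{ERM}}(\vtheta;\mcD)-\constb-\tfrac12\expect_{\domainT_{t}\sim\domainT}[\Div(\domainD,\domainT_{t})]$. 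Subtracting this from the upper bound, the two halves $\tfrac12\expect_{\domainT_{t}}[\Div(\domainD,\domainT_{t})]$ merge into the single (green) term of \cref{eq:bound}, the two training terms combine into the blue term $\hat{\mcE}_{\text{SAM}}(\vtheta^*_{\text{SAM}};\mcD)-\min_{\vtheta}\hat{\mcE}_{\text{ERM}}(\vtheta;\mcD)$, the two confidence terms become the grey terms, and a union bound over the two events gives overall probability at least $1-\delta$ (the bookkeeping factor $2$ is absorbed into the logarithms and affects only lower-order constants) — which is exactly \cref{eq:bound}.

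I expect the main obstacle to be the lower bound rather than the upper one: \cref{thm:swad-theorem} is one-sided, so $\min_{\vtheta}\mcE(\vtheta;\domainT)$ has to be controlled from below by an independent two-sided argument — the reverse divergence inequality together with a two-sided uniform deviation bound on $\Theta$ — and the constants produced by the two pieces must be tracked carefully so that the two resulting divergence terms combine cleanly into one. Two smaller technical points also need care: the expectation over test sub-domains must be pulled inside the high-probability event, which is legitimate only because the $\domainT_{t}$-independent part of the \cref{thm:swad-theorem} bound is exactly what that event controls; and the $\ell_{2}$ regulariser in $\hat{\mcE}_{\text{ERM}}$ must be treated consistently between the empirical and population source risks so that it cancels instead of leaving a stray term.
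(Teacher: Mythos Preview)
Your proposal is correct and follows essentially the same route as the paper: an upper bound on $\mcE(\vtheta^*_{\text{SAM}};\domainT)$ via \cref{thm:swad-theorem} (whose high-probability event indeed depends only on $\mcD$, so the expectation over $\domainT_t$ is legitimate), a lower bound on $\min_\vtheta\mcE(\vtheta;\domainT)$ via the reverse divergence inequality plus a uniform VC deviation bound, and subtraction so the two $\tfrac12\expect[\Div]$ halves merge. The only organisational difference is that the paper first packages upper and lower bounds into a per-subdomain statement (its Lemma~A.4, your \cref{thm:swad-theorem} combined with the VC step at $\bar\vtheta=\argmin_\vtheta\mcE(\vtheta;\domainT^{(i)})$) and only then takes $\expect_{\domainT^{(i)}}$, invoking $\min_\vtheta\expect[\cdot]\ge\expect[\min_\vtheta\cdot]$ implicitly; you take the expectation earlier and minimise afterwards, which is slightly cleaner and avoids that Jensen-type step. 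Your handling of the union bound (splitting $\delta$ into two $\delta/2$ events) is in fact more careful than the paper's, which states both constituent bounds at level $1-\delta$ without explicitly reconciling them.
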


Note that by our construction, $\vtheta^*_{\text{SAM}}$ corresponds to the optimal solution under the SAM loss with gates switching to identity functions.
Hence, \cref{eq:bound} holds for any choice of $\vpsi$ and its closeness will depend on the influence of $\vpsi$ on the function value of $f_{\vtheta}$.
In practice, only a very small portion of $f_{\vtheta}$ is task-specific, \eg, for TSA \cite{li2022tsa} less than $1\%$ of the parameters are task-specific, and we can consider the effect of $\vpsi$ to be negligible.

\cref{thm:main-theorem} shows that the expected generalisation gap on the target domain can be bounded by: {\em(i)}~the gap between the SAM and the ERM loss (in \textcolor{asblue}{blue}), {\em(ii)}~the expected discrepancy between the source domain and the target domain (in \textcolor{asgreen}{green}), {\em(iii)}~and confidence bounds depending on $\rho$ (in \textcolor{asgray!60!black}{gray}).
Consequently, in the multi-domain FSC setting, \ie, when we consider a collection of source domains $\{\domainD_1, \dots, \domainD_D\}$, the bound in \cref{thm:main-theorem} suggests that the generalisation gap on the target depends on the selected source domain.
Henceforth, we will suggest a backbone selection mechanism in \cref{sec:method:backbone} to minimise the expected generalisation gap on the target domain.

When the domain discrepancy between $\domainD$ and $\domainT$ is not large, the gap between SAM training loss and empirical training loss will play an important role at the bound. 
Given the complexity of loss landscapes in deep neural networks, it is sensible to assume that SAM with a proper $\rho$ will find an optimal solution with similar training loss as ERM. %

\subsection{Backbone Training}\label{sec:method:backbone}

In this section, we introduce our proposed backbone training protocol for FSC.
First, we use a flatness-seeking objective based on the SAM loss to train the backbone. 
Then, we propose to use a %
fine-tuning strategy to fuse information in the multi-domain settings. 
Finally, we introduce a backbone selection method that we use on unseen domains to choose the best backbone for adaptation.
The training protocol and the respective steps are outlined in \cref{fig:protocol}.

\paragraph{\circled{\small 1.} Flatness Aware Training Objective} 
Motivated by our theoretical result in \cref{sec:method:theory}, we propose to train the backbone with a flatness-seeking objective.
More specifically, we propose to use the SAM objective \cite{foret2020sam} or variants thereof, \eg, Bayesian-SAM (b-SAM) \cite{mollenhoff2022bsam} or adaptive SAM \cite{kwon2021asam}.
This requires minor modification where we optimise the SAM objective rather than ERM when training the backbones, where any gradient-based optimiser can be applied to the SAM objective.  

\paragraph{\circled{\small 2.} Information Fusing using Fine-tuning} 
One key challenge in multi-domain FSC is effective information fusion from different data sets.
Simply training a single backbone on all domains will suffer from task conflicts \cite{li2021url}.
To avoid this, we take inspiration from transfer learning where fine-tuning is %
a simple yet effective way to transfer knowledge between different data sets \cite{pan2009survey,yosinski2014transferable}. 
Specifically, we propose to first train a base backbone on a diverse and extensive training data set (\eg, ImageNet), then fine-tune the trained base backbone on smaller data sets.
We experiment with standard %
fine-tuning and Low-Rank Adaptation (LoRA) \cite{hu2021lora}.  

\begin{table*}[!ht]
	\centering
	\setlength{\tabcolsep}{7.3pt}
	\scriptsize
	\caption{\textbf{Does flatness help generalisation? Yes.} 
		Our performance comparison on the Meta-Dataset indicates that the SAM objective (SAM, b-SAM) results in better generalisation compared to ERM. 
		Each trained backbone is combined with SUR or TSA for adaptation. Performance differences against ERM are indicated by $\textcolor{asgreen}{{\uparrow}}$ and $\textcolor{asred}{{\downarrow}}$. 
		For visual comparison, we include example images from each data set. 
	}
	\begin{tabular}{ll|c|ccc|ccc}
\toprule
&{} & {} & \multicolumn{3}{ c |}{\textit{adapted with SUR}} & \multicolumn{3}{ c }{\textit{adapted with TSA}} \\[0.2em]
&{} & Example Images     &     ERM       &     SAM       &    b-SAM      &     ERM       &     SAM     &    b-SAM       \\\hline
\multirow{9}{*}{\rotatebox{90}{Seen during training}}
& {\sc ilsvrc\_2012}  &   \dataset{imagenet}  & $   55.09{\pm}1.09   $ & $   \bf{56.72{\pm}1.11}  $ & $   56.25{\pm}1.08   $ & $   56.74{\pm}1.08   $ & $   \bf{58.99{\pm}1.08}   $ & $   58.50{\pm}1.06   $ \\
&   {\sc omniglot}    &      \dataset{omniglot}      & $   94.38{\pm}0.44   $ & $   \bf{94.69{\pm}0.44}   $ & $   93.95{\pm}0.46   $ & $   94.64{\pm}0.43   $ & $   \bf{94.87{\pm}0.42}   $ & $   94.19{\pm}0.46   $ \\
&   {\sc aircraft}    &      \dataset{aircraft}     & $   87.68{\pm}0.47   $ & $   \bf{89.51{\pm}0.43}   $ & $   87.74{\pm}0.45   $ & $   88.24{\pm}0.45   $ & $   \bf{89.91{\pm}0.42}   $ & $   88.68{\pm}0.43   $ \\
&  {\sc cu\_birds}    &     \dataset{birds}      & $   72.28{\pm}0.92   $ & $   \bf{74.18{\pm}0.83}   $ & $   72.68{\pm}0.82   $ & $   70.57{\pm}0.86   $ & $   \bf{74.27{\pm}0.81}   $ & $   73.19{\pm}0.83   $ \\
&     {\sc dtd}       &      \dataset{texture}       & $   72.08{\pm}0.73   $ & $   72.99{\pm}0.80   $ & $   \bf{73.08{\pm}0.73}   $ & $   61.33{\pm}0.71   $ & $   \bf{63.43{\pm}0.74}   $ & $   62.73{\pm}0.77   $ \\
&  {\sc quickdraw}    &   \dataset{quickdraw}    & $   83.36{\pm}0.56   $ & $   \bf{83.86{\pm}0.55}   $ & $   83.75{\pm}0.56   $ & $   83.64{\pm}0.58   $ & $   \bf{84.10{\pm}0.57}   $ & $   84.00{\pm}0.58   $ \\
&    {\sc fungi}      &    \dataset{fungi}     & $   68.68{\pm}0.97   $ & $   70.52{\pm}0.95   $ & $   \bf{70.67{\pm}0.95}   $ & $   68.45{\pm}0.96   $ & $   70.50{\pm}0.93   $ & $   \bf{70.54{\pm}0.92}   $ \\
& {\sc vgg\_flower}   &     \dataset{flower}      & $   87.11{\pm}0.52   $ & $   \bf{87.23{\pm}0.51}   $ & $   86.42{\pm}0.54   $ & $   84.28{\pm}0.58   $ & $   \bf{85.30{\pm}0.53}   $ & $   83.49{\pm}0.62   $ \\
\midrule
\multirow{6}{*}{\rotatebox{90}{Unseen}}
&{\sc traffic\_sign}  &      \dataset{traffic}    & $   45.33{\pm}1.02   $ & $   \bf{46.04{\pm}1.05}   $ & $   44.05{\pm}1.15   $ & $   80.73{\pm}0.97   $ & $   \bf{86.07{\pm}0.89}   $ & $   80.87{\pm}0.92   $ \\
&    {\sc mscoco}     &     \dataset{mscoco}     & $   \bf{50.41{\pm}1.03}   $ & $   50.28{\pm}1.08   $ & $   48.54{\pm}1.03   $ & $   56.07{\pm}1.05   $ & $   \bf{57.14{\pm}1.07}   $ & $   55.87{\pm}1.02   $ \\
&    {\sc mnist}      &     \dataset{mnist}      & $   94.71{\pm}0.42   $ & $   94.16{\pm}0.40   $ & $   \bf{95.52{\pm}0.30}   $ & $   96.59{\pm}0.35   $ & $   96.84{\pm}0.34   $ & $   \bf{97.26{\pm}0.32}   $ \\
&   {\sc cifar10}     &     \dataset{cifar10}      & $   68.76{\pm}0.76   $ & $   66.93{\pm}0.86   $ & $   \bf{68.93{\pm}0.94}   $ & $   79.62{\pm}0.73   $ & $   \bf{80.47{\pm}0.71}   $ & $   80.04{\pm}0.71   $ \\
&   {\sc cifar100}    &     \dataset{cifar100}     & $   59.32{\pm}1.05   $ & $   59.31{\pm}1.07   $ & $   \bf{61.21{\pm}1.09}   $ & $   71.21{\pm}0.96   $ & $   \bf{72.28{\pm}0.94}   $ & $   71.95{\pm}0.93   $ \\
\midrule
&    Average Seen     && $77.58$ & $\bf{78.71} \textcolor{asgreen}{{\addspace\uparrow\rdcspace1.13}}$ & $78.07 \textcolor{asgreen}{{\addspace\uparrow\rdcspace0.49}}$ & $75.99$ & $\bf{77.67}\textcolor{asgreen}{{\addspace\uparrow\rdcspace1.68}}$ & $76.91\textcolor{asgreen}{{\addspace\uparrow\rdcspace0.92}}$ \\
&    Average Unseen    && $\bf{63.71}$ & $63.34\textcolor{asred}{{\addspace\downarrow\rdcspace0.37}}$ & $63.65\textcolor{asred}{{\addspace\downarrow\rdcspace0.06}}$ & $76.84$ & $\bf{78.56}\textcolor{asgreen}{{\addspace\uparrow\rdcspace1.72}}$ & $77.20\textcolor{asgreen}{{\addspace\uparrow\rdcspace0.36}}$ \\
\bottomrule\end{tabular}

	\label{table:flatness}
\end{table*}

\paragraph{\circled{\small 3.} Backbone Selection}
We propose using model selection scores on the backbone bank to determine which backbone is most suitable for feature extraction on unseen domain data. 
As indicated in \cref{thm:main-theorem}, the bound suggests that the generalisation gap on the target domain depends on the selected source domain. 
To narrow this gap in the cross-domain FSC setting, we use Pairwise Annotation Representation Comparison (PARC) \cite{bolya2021parc} to select which backbone to use when adapting to unseen domains.  
During evaluation, we calculate the PARC scores for each backbone in the backbone bank on support set, and select the backbone with the highest score for current task.
This only requires forward pass of trained backbones without the need for additional training of the backbones to compute the scores. 
See \cite{bolya2021parc} for more details on PARC.

Our training protocol is adaptation- and backbone-agnostic, methodologically simple to allow easy integration into existing works, and theoretically motivated by \cref{thm:main-theorem}.

\section{Experiments}\label{sec:experiments}
In this section, we first introduce the experimental setup and then study the following questions: {\em(i)}~Does flatness help generalisation in FSC? {\em(ii)}~How does fine-tuning compare against information fusion approaches? {\em(iii)}~How does our proposed training protocol compare with the state-of-the-art methods?

\subsection{Experimental Setup}
We use the Meta-Dataset \cite{metadataset}, a multi-domain FSC benchmark for in- and cross-domain generalisation, including %
data sets introduced by \cite{requeima2019cnaps} for all evaluations.
For this, we follow %
the standard varying-way varying-shot protocol. 
The Meta-Dataset contains the following training data sets: ImageNet \cite{deng2009imagenet}, Omniglot \cite{lake2011omniglot}, Aircraft \cite{maji2013aircraft}, CU\_Birds \cite{wah2011birds}, VGG Flower \cite{nilsback2008flower}, Quickdraw \cite{jongejan2016quickdraw}, Fungi \cite{schroeder2018fungi}, and Describable Textures \cite{cimpoi2014dtd}.
For the cross-domain setting, we evaluated based on Traffic Signs \cite{houben2013trafficsign}, MSCOCO \cite{lin2014mscoco}, MNIST \cite{deng2012mnist}, CIFAR-10 and CIFAR-100 \cite{krizhevsky2009cifar10}. 
To have a fair comparison with prior work, we use a ResNet-18 \cite{he2016deep} as the backbone in main text experiments. 
In \cref{app:results}, we also include results using a Vision Transformer \cite{dosovitskiy2020vit} in \cref{table:vit} to verify that our proposed training procedure performs well for different network architectures. %
For the ResNet-18 backbone, we adopt the recent backbone-agnostic adaptation methods SUR \cite{dvornik2020sur} and TSA \cite{li2022tsa}.
SUR combines features extracted from independently trained backbones and learns combination coefficients on the support set during adaptation. 
TSA adds task-specific feature adapters to the trained backbone and learns these on the support set.  
To ensure a fair comparison, we use the default hyperparameters provided for the adaptation.
For unseen domains we select backbone based on PARC score for each task.
To classify unseen classes during adaptation, we use a nearest-centroid classifier as typically adopted in FSC \cite{snell2017prototypical}.
We compare our backbone training strategy against the following methods: 
{\em(i)}~SUR \cite{dvornik2020sur} and URT \cite{liu2020urt} which learn to fuse information of independent backbones to obtain generalisable features; 
{\em(ii)}~Simple CNAPS (S-CNAPS) \cite{bateni2020simplecnaps} and Transductive CNAPS (T-CNAPS) \cite{bateni2022transductive} which learn an auxiliary network that predicts task-specific parameters of the backbone; 
{\em(iii)}~URL \cite{li2021url} and TSA \cite{li2022tsa} which directly learn the parameters of task-specific layers during adaptation. 
We use a paired $t$-test ($p=0.05$) to bold results with significant statistical difference.
For more details, see \cref{app:implementation}.

\begin{table*}[!h]
	\centering
	\setlength{\tabcolsep}{11pt}
	\scriptsize
	\caption{\textbf{Is fine-tuning effective? Yes.} 
		Our performance comparison on the Meta-Dataset indicates that fine-tuning (LoRA, Vanilla) is an effective fusion strategy as it performs competitively against the other information fusion methods.
		We use TSA as the adaptation method, except for late fusion which is based on SUR. 
		Surprisingly, fine-tuning outperforms knowledge distillation in the cross-domain (unseen) setting. Moreover, we observe that the performance on ImageNet deteriorates after knowledge distillation compared to no fusion.   
	}
	\begin{tabular}{l|c|c:ccaa}
\toprule
{} &($N$, $C$)     &    Late Fusion      &     No Fusion       & Knowledge Distillation &    Ours (LoRA)      &   Ours (Vanilla)     \\\hline
 {\sc ilsvrc\_2012}  &   (11132759, 1000)   & $   55.09{\pm}1.09   $ & $ \bf56.74{\pm}1.08  $ & $   55.67{\pm}1.07   $ & $ \bf56.74{\pm}1.08  $ & $ \bf56.74{\pm}1.08  $ \\
   {\sc omniglot}    &     (32460, 50)      & $   94.38{\pm}0.44   $ & $   94.64{\pm}0.43   $ & $ \bf95.03{\pm}0.41  $ & $   93.48{\pm}0.49   $ & $   94.00{\pm}0.46   $ \\
   {\sc aircraft}    &     (10000, 100)     & $   87.68{\pm}0.47   $ & $   88.24{\pm}0.45   $ & $ \bf89.95{\pm}0.45  $ & $   89.29{\pm}0.48   $ & $ \bf89.94{\pm}0.43  $ \\
  {\sc cu\_birds}    &     (11788, 200)     & $   72.28{\pm}0.92   $ & $   70.57{\pm}0.86   $ & $ \bf82.08{\pm}0.72  $ & $   80.84{\pm}0.74   $ & $   81.46{\pm}0.68   $ \\
     {\sc dtd}       &      (5640, 47)      & $   72.08{\pm}0.73   $ & $   61.33{\pm}0.71   $ & $ \bf75.63{\pm}0.67  $ & $   74.80{\pm}0.76   $ & $   74.35{\pm}0.74   $ \\
  {\sc quickdraw}    &   (50426266, 345)    & $   83.36{\pm}0.56   $ & $ \bf83.64{\pm}0.58  $ & $   82.33{\pm}0.62   $ & $   80.96{\pm}0.65   $ & $   83.08{\pm}0.60   $ \\
    {\sc fungi}      &    (89760, 1394)     & $ \bf68.68{\pm}0.97  $ & $   68.45{\pm}0.96   $ & $   67.62{\pm}0.97   $ & $   61.39{\pm}1.04   $ & $   68.23{\pm}0.96   $ \\
 {\sc vgg\_flower}   &     (8189, 102)      & $   87.11{\pm}0.52   $ & $   84.28{\pm}0.58   $ & $ \bf92.90{\pm}0.43  $ & $   92.41{\pm}0.45   $ & $   92.33{\pm}0.41   $ \\
\midrule
{\sc traffic\_sign}  &     (39209, 43)      & $   45.33{\pm}1.02   $ & $   80.73{\pm}0.97   $ & $ \bf81.51{\pm}0.97  $ & $   80.73{\pm}0.97   $ & $   80.73{\pm}0.97   $ \\
    {\sc mscoco}     &     (860001, 80)     & $   50.41{\pm}1.03   $ & $ \bf56.07{\pm}1.05  $ & $   53.98{\pm}1.07   $ & $ \bf56.07{\pm}1.05  $ & $ \bf56.07{\pm}1.05  $ \\
    {\sc mnist}      &     (10000, 10)      & $   94.71{\pm}0.42   $ & $   96.59{\pm}0.35   $ & $   96.65{\pm}0.37   $ & $ \bf97.21{\pm}0.31  $ & $   97.00{\pm}0.34   $ \\
   {\sc cifar10}     &     (10000, 10)      & $   68.76{\pm}0.76   $ & $ \bf79.62{\pm}0.73  $ & $   78.93{\pm}0.77   $ & $ \bf79.62{\pm}0.73  $ & $ \bf79.62{\pm}0.73  $ \\
   {\sc cifar100}    &     (10000, 100)     & $   59.32{\pm}1.05   $ & $ \bf71.21{\pm}0.96  $ & $   70.11{\pm}1.00   $ & $ \bf71.21{\pm}0.96  $ & $ \bf71.21{\pm}0.96  $ \\
\midrule
    Average Seen     && $77.58$ & $75.99$ & $\bf{80.15}$ & $78.74$ & $80.02$ \\
   Average Unseen    && $63.71$ & $76.84$ & $76.24$ & $\bf{76.97}$ & $76.92$ \\
\bottomrule\end{tabular}

	\label{table:fine-tune}
\end{table*}

\begin{table}[!h]
	\centering
	\setlength{\tabcolsep}{5pt}
	\scriptsize
	\caption{Trace and top eigenvalues of the Hessian of the loss to measure the flatness of the trained backbones. Lower values mean flatter solutions. SAM in general results in flatter backbones compared with ERM. %
	}
	\begin{tabular}{l|cc|cc}
	\toprule
	{} & \multicolumn{2}{ c |}{\textit{Trace of Hessian}} & \multicolumn{2}{ c }{\textit{Top Eigenvalues of Hessian}} \\[0.2em]
	{} &ERM    & SAM      &ERM    & SAM    \\ \hline
	{\sc ilsvrc\_2012} & $\bf8873.46$ & $13780.30$ & $\bf237.13$ & $373.42$ \\ 
	{\sc omniglot}     & $199.15$  & $\bf169.05$   & $9.39$   & $\bf8.09$   \\ 
	{\sc aircraft}     & $274.93$  & $\bf170.70$   & $12.71$ & $\bf11.53$  \\
	{\sc cu\_birds}    & $\bf229.89$  & $341.53$   & $\bf4.37$  & $10.60$  \\
	{\sc dtd}          & $134.64$  & $\bf63.58$    & $7.60$   & $\bf2.67$   \\
	{\sc quickdraw}    & $3508.08$ & $\bf1785.32$  & $59.79$  & $\bf37.50$  \\ 
	{\sc fungi}        & $5359.20$ & $\bf3998.80$  & $162.77$ & $\bf149.88$ \\ 
	{\sc vgg\_flower}  & $130.46$  & $\bf89.39$    & $7.03$   & $\bf6.66$   \\ 
	\bottomrule
\end{tabular}

	\label{table:hessian}
\end{table}

\subsection{Does Flatness Help Generalisation in FSC?} 
\label{sec:experiments:flatness}

To evaluate whether the flatness-seeking training leads to more generalisable backbones, we compare backbones trained with the SAM and ERM objectives, respectively. 
For the SAM objective, we use vanilla SAM \cite{foret2020sam} and b-SAM \cite{mollenhoff2022bsam}. 
To assess the performance of varying adaptation methods, we employ both SUR and TSA.  
For SUR, the multi-domain features are fused during adaptation, while TSA needs backbone selection for the unseen domains. 
Hence, for TSA we employ our suggested backbone selection based on PARC~\cite{bolya2021parc}. 
We evaluate whether our backbone selection strategy chose the compatible backbone for unseen domain and report results in \cref{sec:backbone_selection}. 
As shown in \cref{table:parc}, PARC selects the most compatible backbone.

In \cref{table:flatness}, we observe that using the SAM objective during backbone training results in better generalisation on both seen and unseen domains in most cases. 
In particular, both SAM and b-SAM combined with TSA achieve better average performance on the seen and unseen domains compared to backbones trained with ERM. 
Note that for b-SAM, we are only using the posterior mean for computational reasons which might explain why it underperforms against SAM. 
For SUR, the information fusion combined with SAM might cause side effects that result in a slight drop in performance for the unseen domains. 
Further, the improvements on seen domains are larger than on the unseen domains in general. 
This performance gap is potentially caused by large domain shifts which would align with our theoretic findings, {\it c.f.} \cref{thm:main-theorem}. 
Nevertheless, these results indicate that seeking flat minima during backbone training can improve generalisation in FSC. 

Additionally, we measure the flatness of backbones trained with SAM and ERM using the trace and top eigenvalues of the Hessian of the loss \cite{foret2020sam}.
\cite{wen2022samhessian} show that in the full-batch setting, SAM provably decreases the largest eigenvalue of Hessian, while in the stochastic setting (when batch size is 1), SAM provably decreases the trace of Hessian.
 Though in our experiment we use mini-batches where their theoretical results is inapplicable, we report trace and top eigenvalues of Hessian of trained backbones as they still measure the flatness in some degree. 
 As shown in \cref{table:hessian}, SAM finds flatter solution in general compared with ERM.

\subsection{Is Fine-tuning Enough for Information Fusion?} %
To evaluate how well %
fine-tuning performs for information fusion, we compare it against recent fusion methods. 
Our fine-tuning strategy on Meta-Dataset involves two steps: 
{\em (i)}~train one backbone on the ImageNet data set, and
{\em (ii)}~fine-tune copies of the ImageNet-trained backbone on the remaining training data sets. 
We experiment with vanilla fine-tuning referred to as \textbf{Vanilla}, as well as fine-tuning using \textbf{LoRA} \cite{hu2021lora}. 
We compare against the following methods: 
\begin{itemize}[leftmargin=*,itemsep=0pt] 
	\item \textbf{Late Fusion:} The multi-domain feature representation from SUR that fuses information from all backbones.
	\item \textbf{No Fusion:} Using single-domain backbones directly.
	\item \textbf{Knowledge Distillation:} The backbone from URL learned from distilling information from all backbones. 
\end{itemize}
Note that all methods use TSA for adaptation, except for late fusion which is based on SUR. 
Moreover, all backbones are trained with the ERM objective.

\begin{table*}[!t]
	\centering
	\renewcommand*{\arraystretch}{1.1}
	\setlength{\tabcolsep}{5.5pt}
	\scriptsize
	\caption{\textbf{A new baseline for FSC.} Our assessment on the Meta-Dataset shows that SAM-based training combined with fine-tuning (SAM+FT) outperforms SoTA methods in 10 out of 13 domains. Moreover, using SAM in conjunction with other information fusion methods, \eg, knowledge distillation (SAM+KD), can improve generalisation performance. 
		Henceforth, we advocate that our simple yet effective training procedure should be considered as a competitive baseline for both in-domain and cross-domain FSC.
	}
	\begin{tabular}{l|ccccccaa}
\toprule
{} &      S-CNAPS        &      T-CNAPS        &        SUR          &        URT          &        URL          &        TSA          &   Ours (SAM+KD)     &   Ours (SAM+FT)     \\\hline
 {\sc ilsvrc\_2012}  & $   56.03{\pm}1.11   $ & $   56.61{\pm}1.08   $ & $   55.09{\pm}1.09   $ & $   55.17{\pm}1.08   $ & $   55.65{\pm}1.07   $ & $   55.67{\pm}1.07   $ & $   57.03{\pm}1.07   $ & $ \bf59.01{\pm}1.08  $ \\
   {\sc omniglot}    & $   91.45{\pm}0.62   $ & $   92.91{\pm}0.50   $ & $   94.38{\pm}0.44   $ & $   94.42{\pm}0.46   $ & $   94.76{\pm}0.41   $ & $ \bf95.03{\pm}0.41  $ & $ \bf95.03{\pm}0.42  $ & $   94.46{\pm}0.43   $ \\
   {\sc aircraft}    & $   80.90{\pm}0.73   $ & $   82.11{\pm}0.63   $ & $   87.68{\pm}0.47   $ & $   88.16{\pm}0.47   $ & $   89.57{\pm}0.45   $ & $   89.95{\pm}0.45   $ & $   89.34{\pm}0.46   $ & $ \bf92.63{\pm}0.35  $ \\
  {\sc cu\_birds}    & $   75.10{\pm}0.86   $ & $   77.35{\pm}0.77   $ & $   72.28{\pm}0.92   $ & $   79.04{\pm}0.77   $ & $   81.51{\pm}0.69   $ & $   82.08{\pm}0.72   $ & $   82.58{\pm}0.70   $ & $ \bf85.57{\pm}0.60  $ \\
     {\sc dtd}       & $   68.90{\pm}0.72   $ & $   68.76{\pm}0.73   $ & $   72.08{\pm}0.73   $ & $   73.05{\pm}0.67   $ & $   74.66{\pm}0.65   $ & $   75.63{\pm}0.67   $ & $ \bf76.96{\pm}0.66  $ & $   75.35{\pm}0.72   $ \\
  {\sc quickdraw}    & $   77.53{\pm}0.77   $ & $   78.74{\pm}0.67   $ & $   83.36{\pm}0.56   $ & $ \bf83.51{\pm}0.56  $ & $   82.42{\pm}0.61   $ & $   82.33{\pm}0.62   $ & $   82.79{\pm}0.61   $ & $   83.30{\pm}0.59   $ \\
    {\sc fungi}      & $   48.07{\pm}1.10   $ & $   48.39{\pm}1.12   $ & $   68.68{\pm}0.97   $ & $   68.23{\pm}0.99   $ & $   68.44{\pm}0.98   $ & $   67.62{\pm}0.97   $ & $   67.95{\pm}0.96   $ & $ \bf70.13{\pm}0.91  $ \\
 {\sc vgg\_flower}   & $   91.45{\pm}0.52   $ & $   92.25{\pm}0.45   $ & $   87.11{\pm}0.52   $ & $   90.09{\pm}0.46   $ & $   91.55{\pm}0.44   $ & $   92.90{\pm}0.43   $ & $ \bf93.41{\pm}0.43  $ & $ \bf93.59{\pm}0.39  $ \\
\midrule
{\sc traffic\_sign}  & $   58.33{\pm}1.03   $ & $   56.83{\pm}1.13   $ & $   45.33{\pm}1.02   $ & $   47.11{\pm}1.02   $ & $   60.15{\pm}1.18   $ & $   81.51{\pm}0.97   $ & $   82.83{\pm}0.93   $ & $ \bf86.04{\pm}0.89  $ \\
    {\sc mscoco}     & $   48.79{\pm}1.09   $ & $   50.89{\pm}1.05   $ & $   50.41{\pm}1.03   $ & $   50.15{\pm}1.03   $ & $   52.82{\pm}1.01   $ & $   53.98{\pm}1.07   $ & $   55.20{\pm}1.07   $ & $ \bf57.13{\pm}1.07  $ \\
    {\sc mnist}      & $   93.81{\pm}0.42   $ & $   95.13{\pm}0.30   $ & $   94.71{\pm}0.42   $ & $   88.89{\pm}0.48   $ & $   94.84{\pm}0.42   $ & $   96.65{\pm}0.37   $ & $   96.71{\pm}0.37   $ & $ \bf97.05{\pm}0.30  $ \\
   {\sc cifar10}     & $   71.75{\pm}0.76   $ & $   72.60{\pm}0.68   $ & $   68.76{\pm}0.76   $ & $   64.50{\pm}0.75   $ & $   69.93{\pm}0.74   $ & $   78.93{\pm}0.77   $ & $   80.07{\pm}0.75   $ & $ \bf80.60{\pm}0.71  $ \\
   {\sc cifar100}    & $   61.62{\pm}1.05   $ & $   62.35{\pm}1.02   $ & $   59.32{\pm}1.05   $ & $   55.85{\pm}1.07   $ & $   61.58{\pm}1.09   $ & $   70.11{\pm}1.00   $ & $   71.34{\pm}0.98   $ & $ \bf72.38{\pm}0.95  $ \\
\midrule
    Average Seen     & $73.68$ & $74.64$ & $77.58$ & $78.96$ & $79.82$ & $80.15$ & $80.64$ & $\bf{81.75}$ \\
   Average Unseen    & $66.86$ & $67.56$ & $63.71$ & $61.30$ & $67.86$ & $76.24$ & $77.23$ & $\bf{78.64}$ \\
\bottomrule\end{tabular}

	\label{table:compare-with-sota}
\end{table*}

In \cref{table:fine-tune}, we observe that our %
fine-tuning strategy performs competitively against other information fusion methods, especially on the unseen domains in Meta-Dataset. 
Compared to no fusion, both LoRA and vanilla fine-tuning yield better backbones on the seen domains, which means that the fine-tuning fuses information from ImageNet with the different domains successfully. 
On the unseen domains, our method and no fusion both select the ImageNet-backbone for the color datasets and Omniglot-backbone for MNIST, which is why their accuracies are similar. \looseness-1

When compared to the information fusion methods, our vanilla fine-tuning outperforms late fusion and performs competitively against knowledge distillation on both seen and unseen domains. 
While the smaller data sets benefit from using the universal backbone from knowledge distillation, we observe that our vanilla performs slightly better on the larger domains QuickDraw and Fungi.  
Furthermore, our method performs better than Knowledge Distillation on all unseen domains except Traffic Signs, which could be because the fine-tuning strategy mitigates the risk of task conflict when fusing the seen domains with ImageNet.  
These results demonstrate that our fine-tuning strategy is an effective alternative to previous information fusion methods.

\subsection{How Does Our Approach Compare with SoTA?}
We combine SAM-based backbone training with fine-tuning, which we denote as \textbf{SAM+FT}, and assess its performance by comparing it against SoTA on the Meta-Dataset.
In addition, we combine SAM training with knowledge distillation based on URL \cite{li2021url}, denoted as \textbf{SAM+KD}, to evaluate the effect of SAM on SoTA information fusion.
We report the results with TSA adaptation for both approaches in \cref{table:compare-with-sota}.

We observe that our training protocol (SAM+FT) outperforms SoTA methods on most domains (10 out of 13) despite its simplicity.
Furthermore, accounting for flatness in knowledge distillation (SAM+KD) results in mild improvements over TSA but is lacking behind our proposed approach.
Our results show that flatness can improve generalisation in FSC across different information fusion strategies.
To this end, we suggest that our simple yet effective training procedure be considered as a competitive baseline in FSC.

\section{Related Work}\label{sec:relatedwork}

\paragraph{Sharpness-aware minimisation} 
The geometry of minima in neural network training has long been hypothesized to influence the generalisation behaviour of neural networks (\eg, \cite{hochreiter1997flat,keskar2016flatness}).
Consequently, recent works studied theoretical links between flatness and generalisation of neural networks. 
Various algorithms accounting for the loss geometry have been proposed.
For example, \cite{keskar2016flatness} showed a negative correlation between the sharpness of the loss landscape and the generalisation ability of the learner.
Later, \cite{dinh2017sharp} related sharpness to the spectrum of the Hessian and \cite{dziugaite2017flatness} proposed a PAC-Bayes bound-based optimisation scheme to find flat minima.
Recently, \cite{wang2023sharpness} introduced an augmented SAM loss which aims to further encourage flat minima and proposed a respective optimiser.

In the context of domain generalisation, \cite{cha2021swad} showed that flat minima can lead to a smaller generalisation gap on the target domain by leveraging results on generalisation in domain adaptation \cite{ben2010theory}.
In the few-shot learning setting, \cite{shi2021samfewshotapp1} showed that flatness can help in overcoming catastrophic forgetting in the incremental learning setting and \cite{fan2023samfewshotapp2} recently proposed to account for flatness in the prompt tuning of large language models.
However, to the best of our knowledge, sharpness-aware loss functions have not been leveraged or analysed in FSC settings.

\paragraph{Fine-tuning} 
Transfer learning \cite{pan2009survey} involves utilizing and transferring knowledge learned from a set of source tasks to an unseen target task. 
For this, fine-tuning is a commonly adopted strategy where a pre-trained neural network, or a subset of its layers, is adapted to the target task. 
For example, \cite{yosinski2014transferable} showed that fine-tuning a pre-trained network on a new data set can lead to better generalisation compared to training from scratch.
For multi-domain FSC, avoiding task conflicts between different data sets during information fusing is an important problem.
Motivated by the effectiveness of fine-tuning in transfer learning, fine-tuning has been adopted in the FSC setting.

In particular, fine-tuning has been shown to outperform elaborate methods in the adaptation stage (\eg, \cite{chen2019closer,dhillon2020baseline,tian2020rethinking}), in cases where the target domain is similar to the source domain (in-domain setting).
Further, recent works showed that fine-tuning can be a successful adaptation strategy in both in-domain and cross-domain settings \cite{guo2020broader,hu2022pushing,luo2023closeragain} and obtain competitive results compared with elaborate adaptation methods.
However, fine-tuning the backbone before adaptation has received little to no attention. \looseness -1

\paragraph{Existing methods for Meta-Dataset} FSC methods mainly focuses on two perspectives: {\em(i)}~task-agnostic backbone training; {\em(ii)}~adapting the task-agnostic backbone into a task-specific few-shot classifier.

For task-agnostic backbone training, FLUTE \cite{triantafillou2021flute} trains a shared backbone jointly with domain-specific feature adapters on all training domains. 
It entangles adaptation with backbone training, limiting its applicability to other adaptation strategies.
Later, \cite{li2021url} proposed to use knowledge distillation for information fusion, disentangling the backbone training from the adaptation. 
However, knowledge distillation is computationally expensive, and distilling knowledge from multiple domains simultaneously can suffer from task conflict.

For adaptation strategies, \cite{requeima2019cnaps} proposed CNAPS which learns an auxiliary network that predicts parameters of task-specific layers. 
Subsequently, \cite{bateni2020simplecnaps} replaced the linear classifier in CNAPS with a nearest-centroid classifier using Mahalanobis distance inspired by ProtoNetwork \cite{snell2017prototypical}, which was later extended with transductive learning \cite{bateni2022transductive}.
However, learning an effective auxiliary network is difficult and its performance may suffer from distribution shift.  
Different approaches to adaptation are late fusion through linear combinations of features extracted from fixed backbones \cite{dvornik2020sur,liu2020urt}, or learning task-specific layers through optimisation \cite{li2022tsa}.  
Nevertheless, little effort has been devoted to adaptation-agnostic backbone training. \looseness-1

\section{Discussion and Conclusion}
In this work, we have shown that flatness-seeking objectives, such as the SAM loss \citep{foret2020sam}, can improve generalisation in few-shot classification (FSC).
Combined with vanilla fine-tuning, minimising the SAM loss instead of the empirical risk results in a competitive baseline that outperforms current state-of-the-art methods in 10 out of 13 cases on the Meta-Dataset \cite{metadataset} benchmark (see \cref{table:compare-with-sota}).

In particular, we theoretically show that the generalisation gap on the target domain is upper bounded by the gap between the SAM and the ERM loss on the source domain and the difference between the domains.
Motivated by this result, we proposed a %
backbone training protocol consisting of three steps: {\em (i)}~SAM-based backbone training, {\em (ii)}~information fusion using fine-tuning of the backbone(s), {\em (iii)}~backbone selection in the multi-domain setting for unseen domains.  
We empirically showed that our approach is effective despite being methodologically simple, %
and that it can be combined with any adaptation method in FSC. 
Furthermore, we demonstrated that any information fusion method can potentially benefit from flat minima.

\paragraph{Limitations}
Our empirical findings are limited to the Meta-Dataset benchmark in which most data sets contain natural images or black-and-white drawings and written characters.
As illustrated by the examples in \cref{table:flatness}, coloured training data sets can be considered similar (or even sub-sets) of ImageNet, and unseen domains are close to the in-domain data sets.
Our additional results in \cref{table:backbone-selection} confirm this by highlighting the importance of the ImageNet backbone in the backbone selection.
Moreover, our backbone selection strategy requires multiple forward passes, and the limitations of PARC \citep{bolya2021parc} apply to our method.
Lastly, SAM-based optimisation brings additional computational costs during backbone training.

\paragraph{Future directions} 
Based on the improvement we have shown, investigating whether flatness helps generalisation in different model structures, \eg, foundation models, and the wider few-shot learning context is a promising direction.
Further, in consideration of the limitation of Meta-Dataset, it is important to investigate the performance of FSC methods in cross-domain settings. 
In particular, the generalisation gap on target domains with larger distribution shifts compared to the training data sets.
Additional future directions include: leveraging the uncertainty estimates from b-SAM for more robust adaptation and backbone selection, improving the scalability of our approach, and assessing the performance in real-world downstream settings.

\paragraph{Code} Publicly available under MIT license: \url{https://github.com/AaltoML/FlatFSL}

\clearpage
\paragraph{Acknowledgement} 
AS and RL acknowledge funding from the Research Council of Finland (grant number 339730). MT acknowledges funding from the Research Council of Finland (grant number 347279). MK acknowledges funding from the Finnish Center for Artificial Intelligence (FCAI). We acknowledge CSC -- IT Center for Science, Finland, for awarding this project access to the LUMI supercomputer, owned by the EuroHPC Joint Undertaking, hosted by CSC (Finland) and the LUMI consortium through CSC. We acknowledge the computational resources provided by the Aalto Science-IT project.

\bibliographystyle{ieee_fullname}

\newpage
\appendix
\onecolumn
\section*{Appendices}

\section{Proofs}\label{app:proofs}
To prove \cref{thm:main-theorem}, we first prove \cref{lemma3,lemma4}. Then we use \cref{lemma3,lemma4} to prove \cref{lemma1}. After that, we use \cref{lemma1} to prove \cref{lemma2}. At last, we use \cref{lemma2} to prove \cref{thm:main-theorem}.

For simplicity, in the bounded instance function $\loss: \mathcal{Y} \times \mathcal{Y} \rightarrow[0,m]$ where $\loss\left(y_1, y_2\right)=0$ if and only if $y_1=y_2$, we set $m=1$ in the proof. 
We use proof techniques and results from \cite{cha2021swad}. 

\subsection{Proof of \cref{lemma3,lemma4}}
We prove \cref{lemma3,lemma4} in this subsection.
\newcommand{\PD}{\mathbb{P}_{\domainD}}
\newcommand{\PT}{\mathbb{P}_{\domainT^{(i)}}}

\begin{lemma}\label{lemma3}
	Define a functional error for two functions $f_1(\cdot)$ and $f_2(\cdot)$ on a domain $\domainD$ as 
	\begin{equation}
		\mcE(f_1, f_2; \domainD) \defeq \expect_{\vx \sim \PD}[\loss(f_1(\vx), f_2(\vx))].
	\end{equation}
For any domain $\domainD$ and $\domainT^{(i)}$, we have $|\mathcal{E}_{\domainD}(f_1, f_2)-\mathcal{E}_{\domainT}(f_1, f_2)| \leq \frac12\Div(\domainD, \domainT^{(i)}).$
\end{lemma}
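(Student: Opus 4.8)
The plan is to reduce the claim to the defining property of the divergence $\Div$, following the standard domain-adaptation argument of \cite{ben2010theory} that \cite{cha2021swad} reuses. First I would unfold both functional errors via their definition, so that
\[
  \mcE(f_1, f_2; \domainD) - \mcE(f_1, f_2; \domainT^{(i)})
  = \expect_{\vx \sim \PD}\!\left[\loss(f_1(\vx), f_2(\vx))\right]
  - \expect_{\vx \sim \PT}\!\left[\loss(f_1(\vx), f_2(\vx))\right].
\]
The key observation is that the map $g \defeq \loss(f_1(\cdot), f_2(\cdot))$ is precisely a ``disagreement'' function of the type over which $\Div(\domainD, \domainT^{(i)})$ is defined as a supremum: in the binary-loss case it is the indicator of $\{\vx : f_1(\vx) \neq f_2(\vx)\}$, and for the bounded loss $\loss:\mathcal{Y}\times\mathcal{Y}\to[0,1]$ used here it is a fixed $[0,1]$-valued function lying in the corresponding class. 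Hence the absolute value of the displayed difference is at most $\sup_{g}\big|\expect_{\PD}[g] - \expect_{\PT}[g]\big|$ over that class.

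Second, I would invoke the definition of $\Div$ from \cite{cha2021swad}, namely $\Div(\domainD, \domainT^{(i)}) = 2\sup_{g}\big|\expect_{\PD}[g] - \expect_{\PT}[g]\big|$ (equivalently, a total-variation-type distance), so that the supremum above equals $\frac12\Div(\domainD, \domainT^{(i)})$. Since the bound is phrased with an absolute value it holds symmetrically in the two domains, yielding $\big|\mcE(f_1, f_2; \domainD) - \mcE(f_1, f_2; \domainT^{(i)})\big| \le \frac12\Div(\domainD, \domainT^{(i)})$. Boundedness of $\loss$ (here $m = 1$, as fixed in the proof) guarantees all expectations are finite and the supremum well-defined.

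The one point I would be careful about is verifying that $\loss(f_1(\cdot), f_2(\cdot))$ genuinely belongs to the function class underlying $\Div$ — i.e.\ that $\Div$ is defined so as to ``see'' pairwise loss-disagreements of hypotheses in the model class — which holds by construction in \cite{cha2021swad} (mirroring the $\mathcal{H}\Delta\mathcal{H}$-divergence of \cite{ben2010theory}). Once this is settled, the proof is essentially a one-line application of the supremum bound; no concentration or $VC$-dimension machinery is needed at this stage, as those enter only in the subsequent lemmas (\cref{lemma4} onward).
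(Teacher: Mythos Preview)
Your approach is correct but proceeds differently from the paper. You bound the gap directly by treating $g=\loss(f_1(\cdot),f_2(\cdot))$ as a $[0,1]$-valued member of the class over which $\Div$ takes a supremum, so the inequality is a one-line application of the definition $\Div=2\sup_g|\expect_{\PD}[g]-\expect_{\PT}[g]|$. The paper instead works at the level of events: it applies Fubini/layer-cake to write each expectation as $\int_0^\infty\mathbb{P}(\loss(f_1,f_2)>t)\,dt$, bounds the absolute difference of the integrals by the integral of the absolute difference, then passes to $M\sup_{t}|\PD(\cdot>t)-\PT(\cdot>t)|$ and finally to a supremum over $f_1,f_2$; $\frac12\Div$ is \emph{defined} in the proof as $M\sup_{A\in\mathcal{A}}|\PD(A)-\PT(A)|$ for the superlevel sets $A$. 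Your route is shorter and avoids the layer-cake step, but it presupposes the function-class formulation of $\Div$ from \cite{cha2021swad}; the paper's route is more explicit about how a real-valued bounded loss is reduced to probabilities of sets, which is closer to the original $\mathcal{H}\Delta\mathcal{H}$-divergence of \cite{ben2010theory} and makes the role of the bound $M$ (here $m=1$) transparent. The caveat you flag---that $\loss(f_1,f_2)$ must lie in the relevant class---is exactly what the paper sidesteps by taking the supremum over $f_1,f_2$ and then \emph{declaring} the result to be $\frac12\Div$.
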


\begin{proof}
From the Fubini's theorem, we have
\begin{equation}
\expect_{\vx \sim \PD}\left[\loss \left(f_1(\vx), f_2(\vx)\right)\right]=\int_0^{\infty} \PD\left(\loss\left(f_1(\vx), f_2(\vx)\right)>t\right) \mathrm{d} t.
\end{equation}
Then, 
\begin{equation}
\begin{aligned}
& \left|\expect_{\vx \sim \PD}\left[\loss\left(f_1(\vx), f_2(\vx)\right)\right]-\expect_{\vx^{\prime} \sim \PT}\left[\loss\left(f_1(\vx^{\prime}), f_2(\vx^{\prime})\right)\right]\right| \\
& =\left|\int_0^{\infty} \PD\left(\loss\left(f_1(\vx), f_2(\vx)\right)>t\right)  \mathrm{d} t-\int_0^{\infty} \PT\left(\loss\left(f_1(\vx^{\prime}), f_2(\vx^{\prime})\right)>t\right)  \mathrm{d} t\right| \\
& \leq \int_0^{\infty}\left|\PD\left(\loss\left(f_1(\vx), f_2(\vx)\right)>t\right)-\PT\left(\loss\left(f_1(\vx^{\prime}), f_2(\vx^{\prime})\right)>t\right)\right|  \mathrm{d} t \\
& \leq M \sup _{t \in[0, M]}\left|\PD\left(\loss\left(f_1(\vx), f_2(\vx)\right)>t\right)-\PT\left(\loss\left(f_1(\vx^{\prime}), f_2(\vx^{\prime})\right)>t\right)\right| \\
& \leq M \sup _{f_1, f_2} \sup _{t \in[0, M]}\left|\PD\left(\loss\left(f_1(\vx), f_2(\vx)\right)>t\right)-\PT\left(\loss\left(f_1(\vx^{\prime}), f_2(\vx^{\prime})\right)>t\right)\right| \\
& \leq M \sup_{A \in \mathcal{A}} |\PD(A) - \PT(A)|\\
& \defeq \frac12\Div(\domainD, \domainT^{(i)}),
\end{aligned}
\end{equation}
where $\mathcal{A} = \{\vx, \vx^{\prime} \mid \ell(f_1(\vx), f_2(\vx)) > t, \ell(f_1(\vx^{\prime}), f_2(\vx^{\prime})) > t, \text{  for  } t \in [0, M]\}.$  
\end{proof}

\paragraph{Remark} Because in FSC the source data and target data always have disjoint classes, we could assume there is a global labelling function $h(\cdot)$ for both source and target domain. Then in \cref{lemma3}, if we let $f_1(\vx)$ be the model $f_{\vtheta}(\vx)$ and $f_2(\vx)$ be the global labelling function $h(\cdot)$, $\mathcal{E}(f_1, f_2; \domainD)$ becomes $\Esrc$ and $\mathcal{E}_{\domainT^{(i)}}(f_1, f_2)$ becomes $\Etgti$, and we have 
\begin{equation}\label{eq:lemma3}
	|\mcE(\vtheta; \domainD) - \mcE(\vtheta; \domainT)| \leq \frac12 \Div(\domainD, \domainT).
\end{equation}

\begin{lemma}\label{lemma4}
Let $\vtheta_k \in \arg \max _{\Theta_k \cap \Theta} \Esrc$ be a local maximum in the $k$-th ball $\Theta_k$. For any $\vtheta \in \Theta$, the following bound holds with probability at least $1-\delta$:
\begin{equation}
\Esrc-\Ehatsrcsam \leq \max_k \consta
\end{equation}
where $N$ is the number of data points in the training set.
\end{lemma}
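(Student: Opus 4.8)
The plan is to adapt the covering argument of \cite{cha2021swad} to a single (source) domain. Fix an arbitrary $\vtheta \in \Theta$. Since $\{\Theta_k\}_{k=1}^K$ covers $\Theta$, choose an index $k^\star$ with $\vtheta \in \Theta_{k^\star}$, and recall that $\vtheta_{k^\star}$ maximises the \emph{population} source risk over $\Theta_{k^\star} \cap \Theta$. The first step is immediate: by maximality, $\E(\vtheta;\domainD) \le \E(\vtheta_{k^\star};\domainD)$, so it suffices to bound the right-hand side by $\Ehatsrcsam$ plus the stated confidence term.

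Next I would pass from population to empirical risk. Applying a standard VC-dimension generalisation bound for the bounded loss $\loss \in [0,1]$ to each restricted hypothesis class indexed by $\Theta_k$, each at failure probability $\delta/K$, and union-bounding over $k = 1, \dots, K$, we obtain an event of probability at least $1-\delta$ on which
\begin{equation}
  \E(\vtheta';\domainD) \;\le\; \Ehat(\vtheta';\mcD) + \consta
  \qquad \text{for all } \vtheta' \in \Theta_k \cap \Theta,\ k = 1, \dots, K .
\end{equation}
Here the $\ln(K/\delta)$ inside the square root is exactly what the union bound contributes, and $v_k$ is the VC dimension of the $k$-th class. Evaluating this at $\vtheta' = \vtheta_{k^\star}$ gives $\E(\vtheta_{k^\star};\domainD) \le \Ehat(\vtheta_{k^\star};\mcD) + \max_k \consta$.

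The remaining, geometric, step links $\Ehat(\vtheta_{k^\star};\mcD)$ to the empirical SAM loss at $\vtheta$: since $\vtheta$ and $\vtheta_{k^\star}$ lie in the same ball $\Theta_{k^\star}$ of radius $\nicefrac{\rho}{2}$, we have $\|\vtheta_{k^\star} - \vtheta\| \le \rho$, so writing $\vtheta_{k^\star} = \vtheta + \vepsilon$ with $\|\vepsilon\| \le \rho$ and using that $\Ehatsrcsam$ is a maximum over exactly such perturbations yields $\Ehat(\vtheta_{k^\star};\mcD) \le \Ehatsrcsam$ (the $\ell_2$ regulariser appearing in the definitions is inessential here and can be dropped). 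Chaining the three inequalities gives $\Esrc \le \Ehatsrcsam + \max_k \consta$ for the fixed but arbitrary $\vtheta$, which is the claim. I expect the main obstacle to be bookkeeping rather than any deep idea: one must check that the ball radius $\nicefrac{\rho}{2}$ is precisely what matches the $\rho$-neighbourhood in the SAM objective, that each per-ball bound is taken at confidence $1-\delta/K$ \emph{before} the union bound so that the $\ln(K/\delta)$ term is correct, and that a single high-probability event is fixed so the conclusion is uniform over all $\vtheta$ through whatever ball contains each.
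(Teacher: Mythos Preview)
Your proposal is correct and follows essentially the same covering argument as the paper (which in turn follows \cite{cha2021swad}): both use the three ingredients (i) maximality of $\vtheta_{k^\star}$ for the population risk on its ball, (ii) a per-ball VC generalisation bound combined via a union bound over the $K$ balls, and (iii) the ball-radius-$\nicefrac{\rho}{2}$ geometry to dominate $\hat{\mcE}(\vtheta_{k^\star};\mcD)$ by the empirical SAM loss at $\vtheta$. The only cosmetic difference is the order: the paper first combines (iii) and (i) to reduce to $\max_k[\mcE(\vtheta_k;\domainD)-\hat{\mcE}(\vtheta_k;\mcD)]$ and then applies (ii), whereas you chain (i)$\to$(ii)$\to$(iii); your remark about dropping the $\ell_2$ regulariser matches what the paper does implicitly.
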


\begin{proof}
We first prove the following inequality holds
\begin{equation}
	\Esrc-\Ehatsrcsam \leq \max_k \left[\mathcal{E}\left(\vtheta_{k^{\prime}};\domainD\right) - \hat{\mathcal{E}}\left(\vtheta_{k^{\prime}};\mcD\right) \right],
\end{equation}
then we prove for ${\varepsilon_k \defeq \consta}, \varepsilon \defeq \max_k \varepsilon_k$, we have $\mathbb{P}\left(\max _k\left[\mathcal{E}\left(\vtheta_k; \domainD\right)-\hat{\mathcal{E}}\left(\vtheta_k;\mcD\right)\right]>\varepsilon\right) \leq \delta$.

Since for any $\vtheta$ there exists ${k^{\prime}}$ such that $\vtheta \in \Theta_{k^{\prime}}$, we have
\begin{equation}
\begin{aligned}
\Esrc-\Ehatsrcsam &= \Esrc - \max _{\|\mbd{\epsilon}\| \leq \rho} \hat{\mathcal{E}}_{\mathcal{D}}(\vtheta+\mbd{\epsilon}) \\
& \leq \mathcal{E}(\vtheta;\domainD)-\hat{\mathcal{E}}\left(\vtheta_{k^{\prime}}; \mcD\right) \\
& =  \mathcal{E}(\vtheta;\domainD) - \mathcal{E}\left(\vtheta_{k^{\prime}};\domainD\right) + \mathcal{E}\left(\vtheta_{k^{\prime}};\domainD\right) - \hat{\mathcal{E}}\left(\vtheta_{k^{\prime}}; \mcD\right) \\
& \leq \mathcal{E}\left(\vtheta_{k^{\prime}};\domainD\right) - \hat{\mathcal{E}}\left(\vtheta_{k^{\prime}};\mcD\right) \\
& \leq \max_k \left[\mathcal{E}\left(\vtheta_{k^{\prime}};\domainD\right) - \hat{\mathcal{E}}\left(\vtheta_{k^{\prime}};\mcD\right) \right],
\end{aligned}
\end{equation}
where the second inequality holds because $\vtheta_k$ is the local maximum in $\Theta_k$. 

We now prove $\mathbb{P}\left(\max _k\left[\mathcal{E}\left(\vtheta_k;  \domainD\right)-\hat{\mathcal{E}}\left(\vtheta_k;\mcD\right)\right]>\varepsilon\right) \leq \delta$. To do so, we first show the following inequality holds for the local maximum of $N$ covers:
\begin{equation}
\begin{aligned}
\mathbb{P}\left(\max_k\left[\mathcal{E}\left(\vtheta_k;\domainD\right)-\hat{\mathcal{E}}\left(\vtheta_k; \mcD\right)\right]>\varepsilon\right) & \leq \sum_{k=1}^K \mathbb{P}\left(\mathcal{E}(\vtheta_k;\domainD)-\hat{\mathcal{E}}(\vtheta_k;\mcD) >\varepsilon \right) \\
& \leq \sum_{k=1}^K \mathbb{P}\left(\sup _{\vtheta \in \Theta_k}\left[\mathcal{E}(\vtheta;\domainD)-\hat{\mathcal{E}}(\vtheta;\mcD)\right]>\varepsilon\right) \\
& \leq \sum_{k=1}^K\left(\frac{eN}{v_k}\right)^{v_k} e^{-2 N \epsilon^2}.
\end{aligned}
\end{equation}
Then by the definition of $\varepsilon$, we have
\begin{equation}
\begin{aligned}
\mathbb{P}\left(\max_k\left[\mathcal{E}\left(\vtheta_k;\domainD\right)-\hat{\mathcal{E}}\left(\vtheta_k;\mcD\right)\right]>\varepsilon\right) & \leq \sum_{k=1}^K\left(\frac{eN}{v_k}\right)^{v_k} e^{-2N \varepsilon^2} \\
& \leq \sum_{k=1}^K\left(\frac{eN}{v_k}\right)^{v_k} e^{-2 N \varepsilon_k^2} \\
& =\sum_{k=1}^K \frac{\delta}{N}=\delta.
\end{aligned}
\end{equation}

Since $\Esrc-\Ehatsrcsam \leq \max_k \left[\mathcal{E}\left(\vtheta_k;\domainD\right) - \hat{\mathcal{E}}\left(\vtheta_k;\mcD\right) \right]$ and $\mathbb{P}\left(\max_k\left[\mathcal{E}\left(\vtheta_k;\domainD\right)-\hat{\mathcal{E}}\left(\vtheta_k; \mcD\right)\right]>\varepsilon\right)  \leq \delta$, the following inequality holds with probability at least $1-\delta$:

\begin{equation}
\begin{aligned}
	\Esrc-\Ehatsrcsam & \leq \max_k \left[\mathcal{E}\left(\vtheta_k; \domainD\right) - \hat{\mathcal{E}}\left(\vtheta_k; \mcD\right) \right] \\
	& \leq	\varepsilon = \max_k \consta.
\end{aligned}
\end{equation}

\end{proof}

\subsection{Proof of \cref{lemma1}}
We prove \cref{lemma1} using \cref{lemma3,lemma4} in this subsection.

\begin{lemma}\label{lemma1}
	Let $\left\{\Theta_k \subset \mathbb{R}^d, k=1, \cdots, K\right\}$ ($d$ is dimension of $\Theta$) be a finite cover of a parameter space $\Theta$ which consists of $K$ closed balls with radius $\nicefrac{\rho}2$ where $K \defeq \lceil(\operatorname{diam}(\Theta) / \rho)^d\rceil$. 
	Let $v_k$ be a $VC$ dimension of each $\Theta_k$. 
	Then, for any $\vtheta \in \Theta$, the following bound holds with probability at least $1-\delta$,
	\begin{equation}\label{eq:theorem1}
	\begin{aligned}
		\mathcal{E}(\vtheta; \domainT) \leq & \Ehatsrcsam + \frac12\Div(\domainD, \domainT^{(i)})  \\
		&+\max_k \consta.
	\end{aligned}
	\end{equation}
	
\end{lemma}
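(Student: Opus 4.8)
The plan is to chain the two lemmas already established, \cref{lemma3} and \cref{lemma4}, via a triangle-inequality decomposition of $\mcE(\vtheta;\domainT)$ into a source-domain risk, a domain-divergence term, and a confidence term. The proof is essentially a two-line combination once \cref{lemma3,lemma4} are in hand; all the real work (the finite cover $\{\Theta_k\}$, the local maxima $\vtheta_k$, the VC-dimension estimates, and the union bound over the $K$ balls) has already been carried out inside \cref{lemma4}.

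First I would invoke \cref{lemma3} through its Remark, i.e.\ \cref{eq:lemma3}: since in FSC the source and target domains have disjoint class sets and we may assume a global labelling function $h(\cdot)$, setting $f_1 = f_{\vtheta}$ and $f_2 = h$ turns the functional error into the actual risks and yields the deterministic bound
\[
	\mcE(\vtheta;\domainT) \;\le\; \mcE(\vtheta;\domainD) + \tfrac12\,\Div(\domainD,\domainT^{(i)}).
\]
Next I would apply \cref{lemma4} to replace the source risk by the empirical SAM loss: with probability at least $1-\delta$, and for every $\vtheta\in\Theta$ simultaneously (the right-hand side below does not depend on $\vtheta$),
\[
	\mcE(\vtheta;\domainD) - \Ehatsrcsam \;\le\; \max_k \consta .
\]

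Finally, on the event of probability at least $1-\delta$ on which \cref{lemma4} holds, adding the two displays gives
\[
	\mcE(\vtheta;\domainT) \;\le\; \Ehatsrcsam + \tfrac12\,\Div(\domainD,\domainT^{(i)}) + \max_k \consta ,
\]
which is precisely \cref{eq:theorem1}. I expect the only real obstacle to be bookkeeping of the failure probability: the divergence step is deterministic, so intersecting it with the probability-$(1-\delta)$ event from \cref{lemma4} keeps the overall failure probability at $\delta$, and no splitting of $\delta$ between the two steps is required; likewise, because \cref{lemma4}'s bound is already uniform over $\vtheta\in\Theta$, no additional union bound over $\vtheta$ is needed. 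A secondary, purely cosmetic point is to keep the notation $\domainT$ versus $\domainT^{(i)}$ consistent when passing from the Remark after \cref{lemma3} to the statement of \cref{lemma1}.
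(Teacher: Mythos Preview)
Your proposal is correct and matches the paper's approach: the paper likewise invokes \cref{lemma3} (via \cref{eq:lemma3}) to pass from $\Etgti$ to $\Esrc$ plus the divergence term, then applies \cref{lemma4} to replace $\Esrc$ by $\Ehatsrcsam$ plus the confidence term. The only cosmetic difference is that the paper splits into the two cases $\Etgti\le\Esrc$ and $\Etgti>\Esrc$ before applying the absolute-value bound, whereas you go directly; your handling of the failure probability and uniformity over $\vtheta$ is also exactly right.
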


\begin{proof}
We consider two cases.

\paragraph{Case 1:} $\Etgti \leq \Esrc$

As $\Etgti - \Esrc \leq 0$, \cref{lemma1} automatically holds.

\paragraph{Case 2:} $\Etgti > \Esrc$

From \cref{lemma3,eq:lemma3}, we have
\begin{equation}\label{eq:domaindt}
	| \Esrc - \Etgti| = \Etgti - \Esrc \leq  \frac12\Div(\domainD, \domainT^{(i)}).
\end{equation}

Combining it with \cref{lemma4}, we have 
\begin{equation}
\begin{aligned}
	\Etgti   &\leq  \Esrc + \frac12\Div(\domainD, \domainT) \\
	 \Etgti  & \leq  \Esrc - \Ehatsrcsam  + \Ehatsrcsam  + \frac12\Div(\domainD, \domainT) \\
	\Etgti & \leq \Ehatsrcsam + \max_k \consta + \frac12\Div(\domainD, \domainT). \\
\end{aligned}
\end{equation}
\end{proof}

\subsection{Proof of \cref{lemma2}}
We prove \cref{lemma2} using \cref{lemma1} in this subsection.

\begin{lemma} \label{lemma2}
	Denote the $VC$ dimension of $\Theta$ as $v$. 
	Let $\vtheta^*_{\text{SAM}}$ denote the optimal solution of the SAM loss $\Ehatsrcsam$, i.e., $\vtheta^*_{\text{SAM}} \defeq \argmin_{\vtheta}\Ehatsrcsam$. 
	Then, the gap between the optimal test loss, $\min_{\vtheta} \Etgti$, and the test loss of SAM optimal solution on training set $\vtheta^*_{\text{SAM}}, \mathcal{E}(\vtheta^*_{\text{SAM}};\domainT^{(i)})$, has the following bound with probability at least $1-\delta$:
	
	\begin{equation}
	\begin{aligned}
		\mathcal{E}(\vtheta^*_{\text{SAM}};\domainT^{(i)})- & \min_{\vtheta} \Etgti  \leq \hat{\mathcal{E}}_{\text{SAM}}(\vtheta^*_{\text{SAM}}; \mcD) - \min_{\vtheta} \hat{\mathcal{E}}(\vtheta; \mcD) + \Div(\domainD, \domainT^{(i)}) \\
		& \quad \quad \quad \quad \quad \quad \quad \quad +\max_k \consta  + \constb. \\
	\end{aligned}
	\end{equation}	
\end{lemma}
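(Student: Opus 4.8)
The plan is to bound the test-loss gap $\mathcal{E}(\vtheta^*_{\text{SAM}};\domainT^{(i)}) - \min_{\vtheta}\mathcal{E}(\vtheta;\domainT^{(i)})$ by inserting the training-loss quantities as intermediaries and then controlling each resulting difference separately. First I would upper-bound $\mathcal{E}(\vtheta^*_{\text{SAM}};\domainT^{(i)})$ directly by \cref{lemma1} applied at $\vtheta = \vtheta^*_{\text{SAM}}$, which gives
\begin{equation*}
	\mathcal{E}(\vtheta^*_{\text{SAM}};\domainT^{(i)}) \leq \hat{\mathcal{E}}_{\text{SAM}}(\vtheta^*_{\text{SAM}};\mcD) + \tfrac12\Div(\domainD,\domainT^{(i)}) + \max_k \consta.
\end{equation*}

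Next I would lower-bound $\min_{\vtheta}\mathcal{E}(\vtheta;\domainT^{(i)})$ in terms of $\min_{\vtheta}\hat{\mathcal{E}}(\vtheta;\mcD)$. Let $\vtheta^*_{\text{ERM}} \defeq \argmin_\vtheta \hat{\mathcal{E}}(\vtheta;\mcD)$ (taking $\alpha=0$ / the regulariser inert for this step, consistent with the paper's notation). Then $\min_\vtheta \mathcal{E}(\vtheta;\domainT^{(i)}) \leq \mathcal{E}(\vtheta^*_{\text{ERM}};\domainT^{(i)})$, so it suffices to show $-\mathcal{E}(\vtheta^*_{\text{ERM}};\domainT^{(i)}) \geq -\hat{\mathcal{E}}(\vtheta^*_{\text{ERM}};\mcD) - \tfrac12\Div(\domainD,\domainT^{(i)}) - \constb$. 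To get this I would chain two inequalities: the domain-transfer bound $|\mathcal{E}(\vtheta;\domainD)-\mathcal{E}(\vtheta;\domainT^{(i)})| \leq \tfrac12\Div(\domainD,\domainT^{(i)})$ from \cref{lemma3} (via \cref{eq:lemma3}), and a standard VC uniform-convergence bound $|\mathcal{E}(\vtheta;\domainD) - \hat{\mathcal{E}}(\vtheta;\mcD)| \leq \constb$ holding with probability $1-\delta$ uniformly over $\vtheta \in \Theta$ (this is the two-sided analogue of the one-sided estimate used inside \cref{lemma4}, now for the full space $\Theta$ with VC dimension $v$). Applying both at $\vtheta = \vtheta^*_{\text{ERM}}$ yields $\mathcal{E}(\vtheta^*_{\text{ERM}};\domainT^{(i)}) \leq \hat{\mathcal{E}}(\vtheta^*_{\text{ERM}};\mcD) + \tfrac12\Div(\domainD,\domainT^{(i)}) + \constb = \min_\vtheta \hat{\mathcal{E}}(\vtheta;\mcD) + \tfrac12\Div(\domainD,\domainT^{(i)}) + \constb$.

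Finally I would subtract: combining the upper bound on $\mathcal{E}(\vtheta^*_{\text{SAM}};\domainT^{(i)})$ with the upper bound on $\min_\vtheta\mathcal{E}(\vtheta;\domainT^{(i)})$ (used as a lower bound with a minus sign), the two $\tfrac12\Div$ terms add to $\Div(\domainD,\domainT^{(i)})$, and I obtain exactly the claimed inequality. A small bookkeeping point: \cref{lemma1} and the VC bound each hold with probability $1-\delta$, so strictly one should either union-bound (replacing $\delta$ by $\delta/2$ in each) or absorb this into the constants; I would state it in the form the paper uses and note the constant adjustment, since \cref{lemma1} already contains a $\ln(K/\delta)$ term and the extra bound a $\ln(2/\delta)$ term, so a union bound only changes these logarithmic factors mildly.

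The main obstacle is the lower bound on $\min_\vtheta \mathcal{E}(\vtheta;\domainT^{(i)})$: it is tempting but wrong to relate $\min_\vtheta \mathcal{E}(\vtheta;\domainT^{(i)})$ and $\min_\vtheta\hat{\mathcal{E}}(\vtheta;\mcD)$ by comparing the two minimisers directly, since the argmins differ. The correct move — which I would be careful to execute — is to evaluate the \emph{target} risk at the \emph{empirical-source} minimiser $\vtheta^*_{\text{ERM}}$ and bound that, which is why a \emph{uniform} (over all of $\Theta$) VC convergence bound with dimension $v$ is needed rather than the per-ball bounds of \cref{lemma4}; this is the only genuinely new ingredient beyond \cref{lemma1,lemma3,lemma4}.
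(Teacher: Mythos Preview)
Your overall decomposition (upper-bound $\mathcal{E}(\vtheta^*_{\text{SAM}};\domainT^{(i)})$ via \cref{lemma1}, separately control $-\min_{\vtheta}\mathcal{E}(\vtheta;\domainT^{(i)})$, then add) matches the paper. However, the second step is executed with the inequality running the wrong way, and this is a genuine gap.

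You say you will \emph{lower}-bound $\min_{\vtheta}\mathcal{E}(\vtheta;\domainT^{(i)})$, but the first thing you write is
\[
\min_{\vtheta}\mathcal{E}(\vtheta;\domainT^{(i)}) \leq \mathcal{E}(\vtheta^*_{\text{ERM}};\domainT^{(i)}),
\]
which is an \emph{upper} bound. Chaining this with $\mathcal{E}(\vtheta^*_{\text{ERM}};\domainT^{(i)}) \leq \min_{\vtheta}\hat{\mathcal{E}}(\vtheta;\mcD) + \tfrac12\Div(\domainD,\domainT^{(i)}) + \constb$ only gives another upper bound on $\min_{\vtheta}\mathcal{E}(\vtheta;\domainT^{(i)})$; with a minus sign this becomes a \emph{lower} bound on $-\min_{\vtheta}\mathcal{E}(\vtheta;\domainT^{(i)})$, which is useless for upper-bounding the gap. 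Your ``it suffices to show'' claim is therefore false, and the final ``upper bound \dots\ used as a lower bound with a minus sign'' is precisely the direction error.

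The fix (and what the paper does) is the mirror image of your move: evaluate at the \emph{target} minimiser $\bar{\vtheta} \defeq \argmin_{\vtheta}\mathcal{E}(\vtheta;\domainT^{(i)})$, not at $\vtheta^*_{\text{ERM}}$. Then
\[
\min_{\vtheta}\hat{\mathcal{E}}(\vtheta;\mcD) \leq \hat{\mathcal{E}}(\bar{\vtheta};\mcD) \leq \mathcal{E}(\bar{\vtheta};\domainD) + \constb \leq \mathcal{E}(\bar{\vtheta};\domainT^{(i)}) + \tfrac12\Div(\domainD,\domainT^{(i)}) + \constb,
\]
the middle step using the uniform VC bound over $\Theta$ (in the direction $\hat{\mathcal{E}}-\mathcal{E}\leq\cdot$) and the last step using \cref{eq:lemma3}. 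Since $\mathcal{E}(\bar{\vtheta};\domainT^{(i)}) = \min_{\vtheta}\mathcal{E}(\vtheta;\domainT^{(i)})$, this yields the required lower bound $\min_{\vtheta}\mathcal{E}(\vtheta;\domainT^{(i)}) \geq \min_{\vtheta}\hat{\mathcal{E}}(\vtheta;\mcD) - \tfrac12\Div(\domainD,\domainT^{(i)}) - \constb$. Adding this to the \cref{lemma1} bound at $\vtheta^*_{\text{SAM}}$ gives the statement. So your last paragraph has it exactly backwards: the correct move is to evaluate the \emph{empirical source} risk at the \emph{target} minimiser, not the target risk at the empirical-source minimiser.
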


\begin{proof}
We first prove 
\begin{equation}
-\min_{\vtheta} \Etgti  \leq - \min_{\vtheta} \hat{\mathcal{E}}(\vtheta;\mcD) + \frac12\Div (\domainD, \domainT) + \constb
\end{equation}
holds with probability at least $1-\delta$, then we combine it with \cref{lemma1} to prove \cref{lemma2}.

Let $\bar{\vtheta} \in \arg \min_{\vtheta \in \Theta} \Etgti$. From generalisation error bound of $\mathcal{E}(\bar{\vtheta};\domainD)$, the following inequality holds with probability at least $1-\delta$,
\begin{equation}\label{eq:theorem2helper1}
	\hat{\mathcal{E}}(\bar{\vtheta};\mcD)-\mathcal{E}(\bar{\vtheta}; \domainD) \leq \constb,
\end{equation}
where $v$ is a VC dimension of $\Theta$. 

Then, use \cref{eq:theorem2helper1} and \cref{eq:domaindt}, with probability at least $1-\delta$, we have
\begin{equation}\label{eq:theorem2helper2}
\begin{aligned}
	\min_{\vtheta} \hat{\mathcal{E}}(\vtheta; \mcD) & \leq \hat{\mathcal{E}}(\vtheta; \mcD) \\
	\min_{\vtheta} \hat{\mathcal{E}}(\vtheta; \mcD) & \leq \mathcal{E}(\bar{\vtheta}; \domainD) + \constb \\
	& \leq \mcE(\bar{\vtheta}; \domainT^{(i)}) + \frac12\Div(\domainD, \domainT^{(i)}) + \constb \\
	& \leq \min_{\vtheta} \Etgti  + \frac12\Div(\domainD, \domainT^{(i)}) + \constb \\
-\min_{\vtheta} \Etgti  &\leq - \min_{\vtheta} \hat{\mathcal{E}}(\vtheta;\mcD) + \frac12\Div (\domainD, \domainT^{(i)}) + \constb.
\end{aligned}
\end{equation}

Combine \cref{eq:theorem2helper2} with \cref{lemma1}, then with probability at least $1-\delta$ we have 
\begin{equation}
\begin{aligned}
-\min_{\vtheta} \Etgti  &  \leq - \min_{\vtheta} \hat{\mathcal{E}}(\vtheta; \mcD) + \frac12\Div (\domainD, \domainT^{(i)}) + \constb \\
	\mathcal{E}(\vtheta^*_{\text{SAM}};\domainT^{(i)})- \min_{\vtheta}\Etgti & \leq  \hat{\mcE}_{\text{SAM}}(\vtheta^*_{\text{SAM}}; \mcD) + \frac12\Div(\domainD, \domainT^{(i)}) + \max_k\consta  \\ & \quad \quad - \min_{\vtheta} \hat{\mathcal{E}}(\vtheta;\mcD) +  \frac12\Div (\domainD, \domainT^{(i)}) + \constb \\
	\mathcal{E}(\vtheta^*_{\text{SAM}};\domainT^{(i)})- \min_{\vtheta} \Etgti & \leq  \hat{\mcE}_{\text{SAM}}(\vtheta^*_{\text{SAM}}; \mcD)  -\min_{\vtheta} \hat{\mathcal{E}}(\vtheta;\mcD) +  \Div (\domainD, \domainT^{(i)}) \\   
	& \quad \quad \quad \quad  +  \max_k \consta +   \constb.
\end{aligned}
\end{equation}
\end{proof}

\subsection{Proof of \cref{thm:main-theorem}}
We prove \cref{thm:main-theorem} using \cref{lemma2} in this subsection.
\begin{proof}
If we take expectation with respect to $\domainT^{(i)} \sim \mathbb{P}({\domainT})$ on both sides of \cref{lemma2}, we have
\begin{equation}
\begin{aligned}
	& \mathbb{E}_{\domainT^{(i)} \sim \mathbb{P}({\domainT})} \left[\mathcal{E}(\vtheta^*_{\text{SAM}};\domainT^{(i)})- \min_{\vtheta} \mathcal{E}(\vtheta;\domainT^{(i)})\right]  \leq \hat{\mathcal{E}}_{\text{SAM}}(\vtheta^*_{\text{SAM}};\mcD) - \min_{\vtheta} \hat{\mathcal{E}}(\vtheta;\mcD) +  \mathbb{E}_{\domainT^{(i)} \sim \mathbb{P}({\domainT})} \left[\Div (\domainD, \domainT^{(i)}) \right] \\   
	& \qquad \qquad \qquad \qquad \qquad \qquad \qquad \qquad \qquad \quad +  \max_k \consta +   \constb \\
	& \mathcal{E}(\vtheta^*_{\text{SAM}};\domainT)-  \min_{\vtheta} \mathcal{E}(\vtheta;\domainT)  \leq \hat{\mathcal{E}}_{\text{SAM}}(\vtheta^*_{\text{SAM}};\mcD) - \min_{\vtheta} \hat{\mathcal{E}}(\vtheta;\mcD) + \expect_{\domainT^{(i)} \sim \domainT}[\Div(\domainD, \domainT^{(i)})] \\
	& \qquad \qquad \qquad \qquad \qquad \qquad \qquad \qquad \qquad \quad +  \max_k \consta +   \constb. \\
\end{aligned}
\end{equation}	
\end{proof}

\section{Additional Experimental Results}\label{app:results}

\subsection{Backbone Selection}\label{sec:backbone_selection}
To investigate whether our backbone selection strategy selects a backbone that is compatible with data sets in unseen domains, we report the performance of each trained backbone and their corresponding PARC score in \cref{table:parc,table:backbone-selection} respectively. %
In \cref{table:backbone-selection}, we observe that the ImageNet-trained backbone gives the best performance on the coloured data sets, while the Omniglot-trained backbone gives the best performance for MNIST, which is consistent with our backbone selection result in \cref{table:parc}.

\begin{table*}[!ht]
	\centering
	\setlength{\tabcolsep}{5pt}
	\scriptsize
	\caption{We evaluate the performance of all backbones in the backbone bank on unseen domains. ImageNet-trained backbone gives the best performance on coloured data sets (Traffic Sign, MSCOCO, MNIST, CIFAR-10 and CIFAR-100) and the Omniglot-trained backbone gives the best performance on the monochrome data set (MNIST).}
	\begin{tabular}{l|cccccccc}
\toprule
{} & \multicolumn{8}{ c }{\textit{backbone trained on:}} \\[0.2em]
{} &    ilsvrc\_2012     &      omniglot       &      aircraft       &     cu\_birds       &        dtd          &     quickdraw       &       fungi         &    vgg\_flower      \\\hline
{\sc traffic\_sign}  & $ \bf86.02{\pm}0.89  $ & $   49.18{\pm}1.28   $ & $   58.54{\pm}1.23   $ & $   62.57{\pm}1.16   $ & $   69.91{\pm}1.17   $ & $   80.85{\pm}1.10   $ & $   63.17{\pm}1.20   $ & $   61.25{\pm}1.20   $ \\
    {\sc mscoco}     & $ \bf57.07{\pm}1.08  $ & $   20.53{\pm}0.87   $ & $   26.38{\pm}1.01   $ & $   28.89{\pm}1.01   $ & $   30.76{\pm}1.01   $ & $   29.09{\pm}1.05   $ & $   35.23{\pm}1.09   $ & $   31.84{\pm}1.01   $ \\
    {\sc mnist}      & $   94.35{\pm}0.56   $ & $ \bf96.84{\pm}0.34  $ & $   86.70{\pm}0.85   $ & $   90.91{\pm}0.77   $ & $   91.30{\pm}0.67   $ & $   96.32{\pm}0.38   $ & $   92.22{\pm}0.69   $ & $   90.29{\pm}0.70   $ \\
   {\sc cifar10}     & $ \bf80.56{\pm}0.71  $ & $   42.73{\pm}0.75   $ & $   47.57{\pm}0.80   $ & $   49.51{\pm}0.82   $ & $   51.88{\pm}0.83   $ & $   54.43{\pm}0.89   $ & $   53.86{\pm}0.92   $ & $   51.59{\pm}0.86   $ \\
   {\sc cifar100}    & $ \bf72.32{\pm}0.94  $ & $   25.47{\pm}1.01   $ & $   31.88{\pm}1.13   $ & $   37.44{\pm}1.17   $ & $   38.16{\pm}1.19   $ & $   37.27{\pm}1.22   $ & $   45.24{\pm}1.24   $ & $   40.29{\pm}1.18   $ \\
\midrule
   Average Unseen    & $78.07$ & $46.95$ & $50.21$ & $53.87$ & $56.40$ & $59.59$ & $57.94$ & $55.05$ \\
\bottomrule\end{tabular}

	\label{table:backbone-selection}
\end{table*}

\begin{table*}[!ht]
	\centering
	\setlength{\tabcolsep}{5pt}
	\scriptsize
	\caption{PARC scores on unseen domains in the Meta-Dataset for backbones in the backbone bank. Higher score means that the backbone is more compatible to the unseen domain.}
	\begin{tabular}{l|cccccccc}
	\toprule
	{} & \multicolumn{8}{ c }{\textit{backbone trained on:}} \\[0.2em]
	{} & ilsvrc\_2012 & omniglot & aircraft & cu\_birds & dtd & quickdraw & fungi & vgg\_flower \\\hline
	{\sc traffic\_sign} & $\bf 22.22$ & $8.22$ & $19.57$ & $19.91$ & $20.92$ & $19.90$ & $17.91$ & $17.66$ \\
	{\sc mscoco}        & $\bf18.88$ & $7.31$ & $10.80$ & $11.15$ & $10.87$ & $11.84$ & $11.22$ & $12.41$ \\
	{\sc mnist}         & $34.14$ & $\bf45.05$ & $27.65$ & $27.00$ & $30.67$ & $42.44$ & $27.32$ & $26.71$ \\
	{\sc cifar10}       & $\bf29.56$ & $7.75$ & $14.76$ & $12.17$ & $15.91$ & $18.75$ & $13.04$ & $15.55$ \\
	{\sc cifar100}      & $\bf18.26$ & $4.80$ & $7.88$  & $9.31$  & $8.88$  & $10.37$ & $11.13$ & $11.31$ \\
	\bottomrule
\end{tabular}

	\label{table:parc}
\end{table*}

\subsection{Vision Transformer Experiment}
To test whether our proposed training procedure works for different model architecture, we conduct experiment on Vision Transformer (ViT) in this section. We use the ViT-small pre-trained with DINO \cite{oquab2023dino} on ImageNet and fine-tune the pre-trained backbones on MetaDataset with SAM and ERM respectively. After training, we evaluate the performance by using Prototype classifier and the results are given in \cref{table:vit}. Compared with ERM, SAM-trained backbones have better generalisation ability on all data sets.

\begin{table*}[!ht]
	\centering
	\setlength{\tabcolsep}{5pt}
	\scriptsize
	\caption{We fine-tune the pre-trained ViT with SAM and ERM respectively on MetaDataset and evaluate the trained backbone directly with Prototype classifier. SAM-trained backbones result in better generalisation than ERM.}
	\begin{tabular}{l|cc}
\toprule
{} &        SAM          &        ERM          \\\hline
 {\sc ilsvrc\_2012}  & $ \bf63.97{\pm}0.98  $ & $   62.65{\pm}0.96   $ \\
   {\sc omniglot}    & $ \bf86.86{\pm}0.77  $ & $   86.32{\pm}0.81   $ \\
   {\sc aircraft}    & $ \bf84.20{\pm}0.59  $ & $   78.35{\pm}0.68   $ \\
  {\sc cu\_birds}    & $ \bf78.08{\pm}0.76  $ & $   73.50{\pm}0.83   $ \\
     {\sc dtd}       & $ \bf78.56{\pm}0.61  $ & $ \bf78.56{\pm}0.61  $ \\
  {\sc quickdraw}    & $ \bf84.13{\pm}0.53  $ & $   83.70{\pm}0.55   $ \\
    {\sc fungi}      & $ \bf72.46{\pm}0.91  $ & $   68.45{\pm}0.94   $ \\
 {\sc vgg\_flower}   & $ \bf95.20{\pm}0.32  $ & $   93.81{\pm}0.36   $ \\
\midrule
{\sc traffic\_sign}  & $ \bf54.38{\pm}1.05  $ & $   50.46{\pm}1.09   $ \\
    {\sc mscoco}     & $ \bf64.74{\pm}0.88  $ & $   64.33{\pm}0.87   $ \\
    {\sc mnist}      & $ \bf91.94{\pm}0.42  $ & $   90.72{\pm}0.47   $ \\
   {\sc cifar10}     & $ \bf89.27{\pm}0.46  $ & $   88.49{\pm}0.45   $ \\
   {\sc cifar100}    & $ \bf82.85{\pm}0.68  $ & $   82.10{\pm}0.68   $ \\
\midrule
    Average Seen     & $80.43$ & $78.17$ \\
   Average Unseen    & $76.64$ & $75.22$ \\
\bottomrule\end{tabular}

	\label{table:vit}
\end{table*}

\section{Implementation Details}\label{app:implementation}

For a fair comparison with prior work, we use RestNet-18 as the backbone in all experiments. 
We use NVIDIA V100 GPUs for backbone training and run the experiments on a cluster. 

\subsection{Implementation Details for \cref{table:flatness}}
For ERM, we use the backbone provided by SUR where the SGD optimiser is used for all backbones.
To eliminate the influence introduced by different optimiser, we adopt SGD as well for SAM backbone training.
For b-SAM, we follow the optimisation algorithm proposed in the paper.
For both SAM and b-SAM, we use cosine annealing learning rate decay with a restart and provide their hyperparameters in \cref{tab:sam-hyperparameters} and \cref{tab:bsam-hyperparameters} respectively.

\begin{table}[ht]
\centering
\scriptsize
\caption{Hyperparameters of SAM training in \cref{table:flatness}.}
\begin{tabular}{l|ccccc}
	\toprule
	{} & {Batch Size} & {Learning Rate} & {Total Iterations} & {Optimizer Restart} & $\rho$\\\hline
	\sc{ilsvrc\_2012} & 128 & 0.01 & 480000 & 48000 & 0.05 \\
	\sc{omniglot} & 16 & 0.03& 50000 & 3000 & 0.01 \\
	\sc{aircraft} & 8 & 0.03 & 50000 & 3000 & 0.1 \\
	\sc{cu\_birds} & 16 & 0.03 & 50000 & 3000 & 0.1 \\
	\sc{dtd} & 32 & 0.03 & 50000 & 1500 & 0.1 \\
	\sc{quickdraw} & 64 & 0.01 & 480000 & 48000 & 0.05 \\
	\sc{fungi} & 32 & 0.03 & 480000 & 15000 & 0.05 \\
	\sc{vgg\_flower} & 8 & 0.03 & 50000 & 1500 & 0.1 \\
	\bottomrule
\end{tabular}
\label{tab:sam-hyperparameters}
\end{table}

\begin{table}[ht]
    \centering
	\caption{Hyperparameters of b-SAM training in \cref{table:flatness}. For all data sets, we set $\rho=0.01$, $\gamma=0.1$, and $\beta_1=0.9$.}
	\scriptsize
\begin{tabular}{l|ccccc}
        \toprule
        {} & {Batch Size} & {Learning Rate} & {Prior Precision} & {Total Iterations} & {Optimiser Restart}\\
        \midrule
        \sc{ilsvrc\_2012} & 500 & 0.1 & 100 & 250000 & 50000\\
        \sc{omniglot} & 200 & 0.1 & 100 & 10000 & 2000\\
        \sc{aircraft} & 200 & 0.5 & 10 & 5000 &1000\\
        \sc{cu\_birds} & 200 & 0.5 & 10 & 5000 &1000\\
        \sc{dtd} & 200 & 0.1 & 10 & 5000 &1000\\
        \sc{quickdraw} & 200 & 0.5 & 50 & 30000 &6000\\
        \sc{fungi} & 200 & 0.1 & 50 & 30000 & 6000\\
        \sc{vgg\_flower} & 200 & 0.1 & 10 & 5000 & 1000\\
        \bottomrule
    \end{tabular}
    \label{tab:bsam-hyperparameters}
\end{table}

\subsection{Implementation Details for \cref{table:fine-tune}}
We use SGD optimisers and cosine annealing learning rate decay with a restart for both vanilla fine-tuning and LoRA fine-tuning. 
For LoRA, we set the rank $r=10$ for all data sets. 
The hyperparameters for vanilla and LoRA fine-tuning are provided in \cref{tab:vanilla-finetune-hyper} and \cref{tab:lora-finetune-hyper} respectively.

\begin{table}[ht]
    \centering
	\caption{Hyperparameters for vanilla fine-tuning in \cref{table:fine-tune}.}
	\scriptsize
\begin{tabular}{l|cccc}
        \toprule
        {} & {Batch Size} & {Learning Rate} & {Iterations} & {Optimizer Restart} \\\hline
        \sc{omniglot} & 16 & 0.01 & 10000 & 2000 \\
        \sc{aircraft} & 8 & 0.001 & 10000 & 2000 \\
        \sc{cu\_birds} & 16 & 0.001 & 10000 & 2000 \\
        \sc{dtd} & 32 & 0.001 & 10000 & 1000 \\
        \sc{quickdraw} & 64 & 0.005 & 100000 & 5000 \\
        \sc{fungi} & 32 & 0.01 & 80000 & 5000 \\
        \sc{vgg\_flower} & 8 & 0.001 & 10000 & 2000 \\
        \bottomrule
    \end{tabular}
    \label{tab:vanilla-finetune-hyper}
\end{table}

\begin{table}[ht]
    \centering
	\caption{Hyperparameters for LoRA fine-tuning in \cref{table:fine-tune}.}
	\scriptsize
\begin{tabular}{l|cccc}
        \toprule
        {} & {Batch Size} & {Learning Rate} & {Iterations} & {Optimizer Restart} \\\hline
        \sc{omniglot} & 16 & 0.005 & 40000 & 2000 \\
        \sc{aircraft} & 8 & 0.005 & 40000 & 2000 \\
        \sc{cu\_birds} & 16 & 0.005 & 10000 & 2000 \\
        \sc{dtd} & 32 & 0.001 & 10000 & 2000 \\
        \sc{quickdraw} & 64 & 0.01 & 100000 & 5000 \\
        \sc{fungi} & 32 & 0.005 & 80000 & 5000 \\
        \sc{vgg\_flower} & 8 & 0.001 & 10000 & 2000 \\
        \bottomrule
    \end{tabular}
    \label{tab:lora-finetune-hyper}
\end{table}

\subsection{Implementation Details for \cref{table:compare-with-sota}}
We use SGD optimisers and cosine annealing learning rate decay with a restart for SAM objective fine-tuning. 
The hyperparameters are given in \cref{tab:samtf-hyper} and we set $\rho=0.05$ for all data sets.
When combining SAM with knowledge distillation, we use the provided hyperparameter setting for backbone training in URL.

\begin{table}[ht]
    \centering
	\caption{Hyperparameters for SAM combined with vanilla fine-tuning in \cref{table:compare-with-sota}.}
	\scriptsize
\begin{tabular}{l|cccc}
        \toprule
        {} & {Batch Size} & {Learning Rate} & {Iterations} & {Optimizer Restart} \\\hline
        \sc{omniglot} & 16 & 0.01 & 20000 & 2000 \\
        \sc{aircraft} & 8 & 0.001 & 20000 & 2000 \\
        \sc{cu\_birds} & 16 & 0.001 & 20000 & 2000 \\
        \sc{dtd} & 32 & 0.001 & 20000 & 1000 \\
        \sc{quickdraw} & 64 & 0.01 & 100000 & 5000 \\
        \sc{fungi} & 32 & 0.01 & 80000 & 5000 \\
        \sc{vgg\_flower} & 8 & 0.001 & 10000 & 2000 \\
        \bottomrule
    \end{tabular}
    \label{tab:samtf-hyper}
\end{table}

\end{document}